\newcommand{\stkout}[1]{\ifmmode\text{\sout{\ensuremath{#1}}}\else\sout{#1}\fi}
\DeclareMathOperator*{\argmax}{arg\,max}
\DeclareMathOperator*{\argmin}{arg\,min}
\begin{document}

\title{A Variational Approach to Bayesian Phylogenetic Inference}

\author{\name Cheng Zhang\thanks{Corresponding author.} \email chengzhang@math.pku.edu.cn \\
       \addr School of Mathematical Sciences and Center for Statistical Science\\
       Peking University\\
       Beijing, 100871, China
       \AND
       \name Frederick A.\ Matsen IV \email matsen@fredhutch.org \\
       \addr Howard Hughes Medical Institute\\
       Computational Biology Program\\
       Fred Hutchinson Cancer Research Center\\
       Seattle, WA 98109, USA\\
       Department of Genome Sciences and Department of Statistics\\
       University of Washington\\
       Seattle, WA 98195, USA
       }

\editor{Chris Wiggins}

\maketitle

\begin{abstract}
Bayesian phylogenetic inference is currently done via Markov chain Monte Carlo (MCMC) with simple proposal mechanisms.
This hinders exploration efficiency and often requires long runs to deliver accurate posterior estimates.
In this paper, we present an alternative approach: a variational framework for Bayesian phylogenetic analysis.
We propose combining subsplit Bayesian networks, an expressive graphical model for tree topology distributions, and a structured amortization of the branch lengths over tree topologies for a suitable variational family of distributions.
We train the variational approximation via stochastic gradient ascent and adopt gradient estimators for continuous and discrete variational parameters separately to deal with the composite latent space of phylogenetic models.
We show that our variational approach provides competitive performance to MCMC, while requiring much fewer (though more costly) iterations due to a more efficient exploration mechanism enabled by variational inference.
Experiments on a benchmark of challenging real data Bayesian phylogenetic inference problems demonstrate the effectiveness and efficiency of our methods.
\end{abstract}

\begin{keywords}
 Bayesian phylogenetic inference, variational inference, subsplit Bayesian networks, structured amortization
\end{keywords}

\section{Introduction}
Bayesian phylogenetic inference is an essential statistical method for modern molecular evolutionary analysis and has been used in a wide range of applications such as genomic epidemiology \citep{Dudas2017-sb, Du_Plessis2021-tq} and conservation genetics \citep{DeSalle2004-mm}.
Given aligned sequence data (e.g., DNA, RNA or protein sequences) and a model of evolution, Bayesian phylogenetics provides principled approaches to quantify the uncertainty of the evolutionary process in terms of the posterior probabilities of phylogenetic trees \citep{Huelsenbeck01b}.
In addition to uncertainty quantification, Bayesian methods enable integrating out tree uncertainty in order to get more confident estimates of parameters of interest, such as factors governing the transmission of Ebolavirus \citep{Dudas2017-sb}.
Bayesian methods also allow complex substitution models \citep{Lartillot2004-qf}, which are important in elucidating deep phylogenetic relationships \citep{Feuda2017-kx}.

Ever since its introduction to the phylogenetic community in the 1990s, Bayesian phylogenetic inference has been dominated by random-walk Markov chain Monte Carlo (MCMC) approaches \citep{Yang97, Mau99, Huelsenbeck01, Drummond02, Drummond05}.
However, this approach is fundamentally limited by the complexities of tree space.
A typical MCMC method for phylogenetic inference involves two steps in each iteration:
first, a new tree is proposed by randomly perturbing the current tree,
and second, the tree is accepted or rejected according to the Metropolis-Hastings acceptance probability.
Any such random walk algorithm faces obstacles in the phylogenetic case, in which the high-posterior trees are a tiny fraction of the combinatorially exploding number of trees.
Thus, major modifications of trees are likely to be rejected, restricting MCMC tree movement to local modifications that may have difficulty moving between multiple peaks in the posterior distribution \citep{Whidden2015-eq}.
Although recent MCMC methods for distributions on Euclidean space use intelligent proposal mechanisms such as Hamiltonian Monte Carlo \citep{Neal2011-yo}, it is not straightforward to extend such algorithms to the composite structure of tree space, which includes both tree topology (discrete object) and branch lengths (continuous positive vector) \citep{Dinh2017-oj}.

Variational inference (VI) is an alternative approximate inference method for Bayesian analysis which is gaining in popularity \citep{Jordan99, Wainwright08, Blei17}.
Unlike MCMC methods that sample from the posterior, VI seeks the best candidate from a family of tractable distributions that minimizes a statistical distance measure to the target posterior, usually the Kullback-Leibler (KL) divergence.
By reformulating the inference problem into an optimization problem, VI tends to be faster and easier to scale to large data (via stochastic gradient descent) \citep{Blei17}.
However, VI can also introduce a large bias if the variational family of distributions is insufficiently flexible.
The success of variational methods, therefore, largely relies on the construction of appropriate tractable variational family of distributions and efficient optimization procedures.

Recent years have witnessed several efforts on VI for Bayesian phylogenetics that focus on continuous parameters (e.g., the branch lengths), where either a fixed tree topology is assumed \citep{Fourment19,Ki2022-nt} or MCMC algorithms (e.g., Gibbs sampling) are employed to handle the uncertainty of tree topologies \citep{Dang19}.
To our knowledge, there have been no previous full variational formulations of Bayesian phylogenetic inference that deal with the joint parameter space.
This has been due to the lack of an appropriate family of distributions on phylogenetic tree topologies that can be used as variational distributions.
However the prospects for variational inference have changed recently with the introduction of \emph{subsplit Bayesian networks} (SBNs) \citep{SBN}, which provide a family of flexible distributions on tree topologies (i.e.\ trees without branch lengths).
SBNs build on previous work \citep{Hhna2012-pm,Larget2013-et}, but in contrast to these previous efforts, SBNs are sufficiently flexible for real Bayesian phylogenetic posteriors \citep{SBN}.
Other families of distributions over the joint phylogenetic parameter space have also been proposed outside of the variational framework~\citep{Holder14,Baele16} with the goal of more efficient marginal likelihood estimation.

In this paper, we develop a general variational inference framework for Bayesian phylogenetics.
We show that SBNs, when combined with appropriate approximations for branch length distributions, can provide flexible variational approximations over the joint latent space of phylogenetic trees with branch lengths.
This, combined with other variational distributions when needed (e.g., distributions of population size parameters), provides a suitable variational family of distributions for Bayesian phylogenetic inference.
The variational approximation can be trained using stochastic gradient ascent via efficient unbiased gradient estimators proposed recently for discrete and continuous parameters \citep{VAE, RWS, VIMCO}.
We introduce several effective approaches to provide practical subsplit support estimation for SBNs that significantly reduce the number of parameters while maintaining adequate approximation power for tree topology posterior estimation.
We also leverage the similarity of local topological structures for a shared parameterization of the branch length distributions over different tree topologies to further reduce the complexity in variational parameterization and ease optimization.
With guided exploration in the tree topology space within that support (enabled by SBNs) and joint learning of the branch length distributions across tree topologies (via structured amortization), our variational approach provides competitive performance to MCMC with much fewer (though more costly) iterations.
Moreover, the variational approximations can be readily used for downstream statistical analysis such as marginal likelihood estimation for model comparison via importance sampling.
We demonstrate the effectiveness and efficiency of our methods on a benchmark of challenging real data Bayesian phylogenetic inference problems.
A preliminary version of this work has appeared in \citet{VBPI}; in addition to providing a complete exposition, this version provides additional benchmarks and extends our work to an additional class of models (``time trees'') which are very popular for analyzing viral sequences.

\section{Background}
In this section, we provide a brief overview of Bayesian phylogenetics and variational inference, which are the fundamental ingredients for this paper.
\subsection{Bayesian Phylogenetics}
The core in Bayesian phylogenetics is a probability model that describes a generative process for a given data set consisting of nucleotide or amino acid sequences from a collection of \emph{taxa} (singular \emph{taxon}, e.g., species).
There are two essential ingredients for this probability model and the associated parameters.
The first ingredient is a phylogenetic tree which contains a tree topology $\tau$ and the associated non-negative branch lengths $\bm{q}$.
The tree topology $\tau$ is a bifurcating tree whose leaves are labelled by the taxa.
It is used to model the evolutionary relationship of the observed species, where the internal nodes represents the unobserved characters (e.g., DNA bases) of the ancestral species.
Phylogenetic trees can be \emph{rooted}, in which a root position in the tree is chosen to be the most ancestral part of the tree, or \emph{unrooted}, if only evolutionary relationships and not the direction of evolution is of interest.
The second ingredient is a continuous-time Markov model that is used to describe the transition probabilities of the characters along the branches of the tree.
These transition probabilities are assumed to be independent across the tree.
In this paper, we focus on the parameter space of phylogenetic trees and assume the Markov evolution model is known (e.g., the \citet{Jukes69} model).
The phylogenetic likelihood is, therefore, the function of phylogenetic tree $\tau,\bm{q}$ that gives the probability of the observed data.
Let $\bm{Y}=\{Y_1,Y_2,\ldots,Y_M\}\in \Omega^{N\times M}$ be the observed sequences (with characters in $\Omega$) of length $M$ over $N$ species.
Let $Y_i(v)$ is the observed character of node $v$ at site $i$.
Define an extension of $Y_i$ to the internal nodes to be an assignment of states $a^i$ to the nodes of the tree such that $a^i_v = Y_i(v)$ for every tip node $v$.
The probability of each site observation $Y_i$ is defined as the marginal distribution over the leaves
\begin{equation}\label{eq:marginal_prob}
p(Y_i|\tau,\bm{q}) = \sum_{a^i}\eta(a^i_{r})\prod_{(u,v)\in E(\tau)}P_{a^i_ua^i_v}(q_{uv})
\end{equation}
where $r$ is the root node, $a^i$ ranges over all extensions of $Y_i$ to the internal nodes with $a^i_u$ being the assigned character of node $u$, $E(\tau)$ denotes the set of edges of $\tau$, $q_{uv}$ is the branch length for the edge $(u,v)\in E(\tau)$, and $P_{ij}(t)$ denotes the transition probability from character $i$ to character $j$ across an edge of branch length $t$ given by the Markov model whose stationary distribution is $\eta$.
For unrooted trees with a time reversible Markov model, \eqref{eq:marginal_prob} also defines a valid marginal probability where $r$ can be any internal node due to the so-called ``pulley principle'' \citep{Felsenstein81}.

Assuming different sites are identically distributed and evolve independently, the likelihood function then takes the following form
\begin{equation}\label{eq:phylo_likelihood}
p(\bm{Y}|\tau, \bm{q}) = \prod_{i=1}^Mp(Y_i|\tau,\bm{q}) = \prod_{i=1}^M \sum_{a^i}\eta(a^i_{r})\prod_{(u,v)\in E(\tau)}P_{a^i_ua^i_v}(q_{uv})
\end{equation}
The phylogenetic likelihood \eqref{eq:phylo_likelihood} can be evaluated efficiently through the pruning algorithm \citep{Felsenstein81}, also known as the sum-product algorithm in general probabilistic graphical models \citep{Strimmer2000, Koller09, Hohna14}.
Given a proper prior distribution with density $p(\tau, \bm{q})$ of the tree topology and the branch lengths, the posterior $p(\tau,\bm{q}|\bm{Y})$ is proportional to the joint density
\begin{equation}\label{eq:phylo_posterior}
p(\tau, \bm{q}|\bm{Y}) = \frac{p(\bm{Y}|\tau,\bm{q})p(\tau,\bm{q})}{p(\bm{Y})} \propto p(\bm{Y}|\tau,\bm{q})p(\tau,\bm{q})
\end{equation}
where $p(\bm{Y})$ is the unknown normalizing constant.
Bayesian phylogenetics then amounts to properly estimating the phylogenetic posterior in~\eqref{eq:phylo_posterior}, which is typically done via Markov chain Monte Carlo (MCMC) methods.

\subsection{Variational Inference}
Rather than sampling from the posterior as in MCMC, VI turns the posterior inference problem into an optimization problem.
Suppose $\bm{Y}$ is the observed data and $p(\bm{Y}|\bm{\theta})$ is the probability model where $\bm{\theta}$ is the set of model parameters.
The objective of VI is to find the closest member from a family of densities over $\bm{\theta}$ to the posterior $p(\bm{\theta}|\bm{Y})$.
Let $q_{\bm{\phi}}(\theta)$ denotes the density of variational distribution where $\bm{\phi}$ is the set of variational parameters.
To measure the ``closeness'' between $p(\bm{\theta}|\bm{Y})$ and $q_{\bm{\phi}}(\bm{\theta})$, a divergence measure $D$ between distributions is often employed such that $D(q_{\bm{\phi}}(\bm{\theta})\|p(\bm{\theta}|\bm{Y}) \geq 0$ and $D(q_{\bm{\phi}}(\bm{\theta})\|p(\bm{\theta}|\bm{Y})) = 0$ iff $q_{\bm{\phi}}(\bm{\theta})=p(\bm{\theta}|\bm{Y})$.
While various divergence measures exist \citep{Amari85, Amari09, Minka05, Sason16}, the most commonly used divergence in VI is the Kullback-Leibler (KL) divergence \citep{Kullback51}
\[
D_{\mathrm{KL}}(q_{\bm{\phi}}(\bm{\theta})\|p(\bm{\theta}|\bm{Y})) = \int q_{\bm{\phi}}(\bm{\theta}) \log\left(\frac{q_{\bm{\phi}}(\bm{\theta})}{p(\bm{\theta}|\bm{Y})}\right)d\bm{\theta}.
\]
Inference now amounts to solving the following optimization problem
\begin{equation}\label{eq:vi}
\bm{\phi}^\ast = \argmin_{\bm{\phi}}D_{\mathrm{KL}}(q_{\bm{\phi}}(\bm{\theta})\|p(\bm{\theta}|\bm{Y})).
\end{equation}
The best approximation $q_{\bm{\phi}^\ast}(\bm{\theta})$ then serves as a proxy for the exact posterior.

As the exact posterior $p(\bm{\theta}|\bm{Y})$ is only known up to a constant $p(\bm{Y})$, the objective in \eqref{eq:vi} is not computable.
In practice, we optimize an alternative function
\[
L(\bm{\phi}) = \int q_{\bm{\phi}}(\bm{\theta}) \log\left(\frac{p(\bm{\theta},\bm{Y})}{q_{\bm{\phi}}(\bm{\theta})}\right)d\bm{\theta} = \log p(\bm{Y}) - D_{\mathrm{KL}}(q_{\bm{\phi}}(\bm{\theta})\|p(\bm{\theta}|\bm{Y})) \leq \log p(\bm{Y}).
\]
This function is called the evidence lower bound (ELBO). As the KL divergence is the gap between the constant $\log p(\bm{Y})$ and the ELBO, maximizing the EBLO is equivalent to minimizing the KL divergence.

To complete the specification of the optimization problem in \eqref{eq:vi}, we need to prescribe a variational family of distributions.
A common choice is the mean-field variational family, where different components of $\bm{\theta}$ are mutually independent.
Given conditionally conjugate exponential family models for $p(\bm{\theta},\bm{Y})$, mean-field variational inference can be solved efficiently via coordinate ascent \citep{PRML}.
However, the independence assumption also leads to insufficient approximation accuracy especially when the exact posterior variables are highly correlated.
Many more flexible variational families have been proposed \citep{Saul96, Bishop98, Barber99, NF, IAF, RealNVP}, albeit the resulting optimization problems are often more difficult to solve.
Furthermore, researchers have also generalized VI beyond conditional conjugate models, and designed generic model-agnostic methods based on stochastic optimization via Monte Carlo gradient estimates \citep{Nott12, Paisley12, VAE, Rezende14, Ranganath13}.
Overall, designing an appropriate variational family that strikes a good balance between expressiveness and complexity in optimization has been one of the key obstacles to the success of VI.
In what follows, we first describe an essential ingredient for the construction of a suitable variational family for Bayesian phylogenetics.

\section{Subsplit Bayesian Networks}\label{sec:sbn}

Since the number of tree topologies explodes combinatorially as the number of taxa increases, designing tractable and flexible distributions over the discrete tree topology space turns out to be quite challenging.
Subsplit Bayesian networks, as recently proposed by \citet{SBN}, supply a powerful probabilistic graphical model that can provide a flexible family of distributions over tree topologies.
The success of SBNs lies in its proper utilization of the similarity of the local topological structures of trees, i.e., \emph{subsplits}, as defined below:

\begin{definition}[Subsplit]
Let $\mathcal{X}$ be a set of labeled leaves that represent the observed taxa. We call a nonempty subset $C$ of $\mathcal{X}$ a \emph{clade} and denote the set of all clades of $\mathcal{X}$ as $\mathcal{C}=\{C\subset \mathcal{X}: C\neq\emptyset\}$. A nonempty subset of a clade $C$ is called a subclade of $C$.
Let $\succ$ be a total order on $\mathcal{C}$ (e.g., lexicographical order). A \emph{subsplit} $(Y,Z)$ of a clade $C$ is an ordered pair of disjoint subclades of $C$ such that $Y\cup Z=C$ and $Y\succ Z$.
\end{definition}

\begin{figure}[t!]
\begin{center}
\input{figs/bn-new.tex}
\caption{
Subsplit Bayesian networks and a simple example for a leaf set of 4 taxa (represented by $A$, $B$, $C$, and $D$ respectively).
({\bf Left}): General subsplit Bayesian networks. The solid backbone represents the complete and binary tree network $\mathcal{B}_\mathcal{X}^\ast$. The dashed arrows represent the additional dependence for more expressive SBNs.
({\bf Middle(left)}): Examples of phylogenetic trees (rooted) that are hypothesized to model the evolutionary history of the taxa.
({\bf Middle(right)}): The corresponding node assignments for the trees.
For ease of illustration, subsplit $(Y,Z)$ is represented as $\frac{Y}{Z}$ in the graph.
The \emph{dashed gray subgraphs} represent trivial splitting processes where the subsplits are deterministically assigned, and are used purely to complement the networks such that the overall network has a fixed structure.
({\bf Right}): The SBN for these examples, which is $\mathcal{B}_\mathcal{X}^\ast$ in this case.}
\label{fig:sbn}
\end{center}
\vspace{-0.15in}
\end{figure}

The hierarchical structure of a rooted tree topology induces a splitting process from the root node to the tip nodes, where each internal node has a subsplit that represents its local splitting pattern.
Following this splitting process, all the topological information of the tree can be naturally converted into a sequence of subsplits.
As the tree topology can be easily recovered from these local splitting patterns afterward, we call them the \emph{subsplit decomposition} of the tree.
For example, in Figure~\ref{fig:sbn}, the top tree has a subsplit decomposition $\{(\{A,B,C\},\{D\}), (\{A\},\{B,C\}), (\{B\},\{C\})\}$ and the bottom tree has a subsplit decomposition $\{(\{A,B\}, \{C,D\}), (\{A\},\{B\}), (\{C\},\{D\})\}$.
Unlike the full tree topologies that require a combinatorially exploding representation space, subsplit decompositions allow us to represent topological information in a more compact manner since subsplits (and other local splitting patterns) can be shared across different tree topologies, as shown in the following example.

\begin{example}\label{example:subsplit_eff}
Consider a leaf label set $\mathcal{X}$ that has $3n$ taxa $A_1,A_2,\ldots, A_{3n}$.
Let $\tau_0$ be a tree topology with $n$ tips.
Now partition $\mathcal{X}$ into $n$ clades with $3$ taxa, say $C_1=\{A_1, A_2, A_3\}$, $C_2=\{A_4, A_5, A_6\}$, $\ldots, C_n=\{A_{3n-2}, A_{3n-1}, A_{3n}\}$ and assign them to the tip nodes of $\tau_0$.
Note that each clade $C_i, 1\leq i \leq n$ may have $3$ different local topologies.
Overall, this allows us to represent $3^n$ tree topologies on $\mathcal{X}$.
On the other hand, the subsplits from these tree topologies are either the subsplits induced by $\tau_0$ (which is $\mathcal{O}(n)$) or the local subsplits from those $3$ taxa subtrees in the tips (which is also $\mathcal{O}(n)$).
Therefore, we can represent an exponential number of tree topologies with $\mathcal{O}(n)$ subsplits. See Figure \ref{fig:subsplit_eff} for an illustration.
\end{example}

\begin{figure}[t!]
\begin{center}
\input{figs/subsplit-efficiency.tex}
\caption{An illustration for Example \ref{example:subsplit_eff}.
({\bf Left}): A base tree $\tau_0$ with 3-taxon clade assignments for the tip nodes.
({\bf Right}): The local topologies that can be taken by the subtrees in $C_1$.}
\label{fig:subsplit_eff}
\end{center}
\end{figure}
Moreover, with appropriate conditional independence assumptions, the subsplit representations of tree topologies inspire a Bayesian network formulation of distributions over the tree topology space.

\begin{definition}[Subsplit Bayesian Network]
A \emph{subsplit Bayesian network} $\mathcal{B}_\mathcal{X}$ on a leaf label set $\mathcal{X}$ of size $N$ is a Bayesian network whose nodes take on subsplit values or singleton clade values of $\mathcal{X}$, and has the following properties.
\begin{enumerate}
\item The root node of $\mathcal{B}_\mathcal{X}$ takes on subsplits of the entire leaf label set $\mathcal{X}$.
\item $\mathcal{B}_\mathcal{X}$ contains a full and complete binary tree network $\mathcal{B}_\mathcal{X}^\ast$ as a subnetwork.
\item The depth of $\mathcal{B}_\mathcal{X}$ is $N-1$, with the root counting as depth $1$.
\end{enumerate}
\end{definition}

Figure~\ref{fig:sbn} provides an illustration of the structure of general subsplit Bayesian networks (SBNs), together with a concrete example for a leaf set of 4 taxa.
We see that $\mathcal{B}_\mathcal{X}^\ast$ itself is an SBN that is contained in all SBNs, i.e., $\mathcal{B}_\mathcal{X}^\ast$ is the minimum SBN on $\mathcal{X}$.
All SBNs, therefore, share the same set of nodes, and differ only in their conditional independence structures.
Moreover, there is a natural universal indexing procedure for the nodes of all SBNs based on $\mathcal{B}_\mathcal{X}^\ast$ as follows. First, we denote the root node as $S_1$.
Then, for any $i$, we denote the two children of $S_i$ on $\mathcal{B}_\mathcal{X}^\ast$ as $S_{2i}$ and $S_{2i+1}$, with $S_{2i}$ being on top of $S_{2i+1}$, until a leaf node is reached (the right plot in Figure~\ref{fig:sbn}).
We call the parent nodes in $\mathcal{B}_\mathcal{X}^\ast$ the \emph{natural parent}.
Although this configuration of SBNs would take $2^{N-1}$ nodes in total, only a small portion of these nodes will be used when it comes to tree probability estimation as the others represent trivial local splitting patterns of tree topologies (the dashed gray subgraphs in the middle plots of Figure~\ref{fig:sbn}).
In practice, we can use conditional probability sharing in Section \ref{sec:tree_prob_est} and subsplit support estimation in Section \ref{sec:support_estimation} for a compact parameterization of SBNs.

For a rooted tree topology, one can find its unique SBN representation by assigning the subsplit decomposition to the nodes of SBNs according to the same splitting process as before (the middle plots in Figure~\ref{fig:sbn}).
This way, the subsplit for a node is always a bipartition of a subclade in the subsplit for its natural parent.
We refer to this kind of assignments of nodes as \emph{compatible node assignments}.

\begin{definition}[Compatible Node Assignment]\label{def:compatible}
For a non-singleton clade $X$, we say a subsplit $(Y,Z)$ is \emph{compatible} with $X$ iff $Y\cup Z=X$.
For a singleton clade $X$, we say a singleton clade $Y$ is compatible with $X$ iff $Y=X$.
We say a full node assignment $\{S_i=s_i\}_{i\geq 1}$ is \emph{compatible} if for all interior node assignments $s_i$ that are subsplits $(Y_i,Z_i)$, the corresponding assignments of the children nodes $s_{2i}$ and $s_{2i+1}$ are compatible with $Y_i,Z_i$, respectively.
\end{definition}

Clearly, every rooted tree topology corresponds to a unique compatible node assignment.
On the other hand, we can also restore a rooted tree given a compatible node assignment.
Therefore, we have the following lemma.

\begin{lemma}\label{lemma:sbnrep}
There is a one-to-one mapping between rooted tree topologies on $\mathcal{X}$ and compatible node assignments.
\end{lemma}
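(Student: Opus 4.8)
The plan is to exhibit the two maps explicitly and verify they are mutually inverse, proceeding by induction on clade size so that the argument reduces to the recursive structure already described in the text.

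First I would set up the forward map $\Phi$. Given a rooted tree topology $\tau$ on $\mathcal{X}$, I associate to each internal node $u$ of $\tau$ the clade $C_u\subseteq\mathcal{X}$ of leaves descending from $u$, together with the subsplit $(Y_u,Z_u)$ obtained by splitting $C_u$ according to the two edges below $u$ (ordered so that $Y_u\succ Z_u$); to each leaf $W$ I associate the fake subsplit $(\{W\},\emptyset)$. These are placed on the nodes of $\mathcal{B}_\mathcal{X}$ via the universal indexing: the root subsplit goes to $S_1$, and if $(Y_i,Z_i)$ is assigned to $S_i$ then the subsplits coming from the subtrees over $Y_i$ and $Z_i$ go to $S_{2i}$ and $S_{2i+1}$ respectively, with fake subsplits propagated deterministically below singletons so that every node of the fixed network $\mathcal{B}_\mathcal{X}$ receives exactly one value. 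I would check that this assignment is well-defined (the depth $N-1$, complete-binary-tree structure of $\mathcal{B}_\mathcal{X}$ guarantees enough room, and the padding below leaves is forced) and that it is compatible: for an interior assignment $s_i=(Y_i,Z_i)$, by construction $s_{2i}$ is the root subsplit of a tree on $Y_i$ (or the fake subsplit when $Y_i$ is a singleton), hence compatible with $Y_i$, and likewise for $s_{2i+1}$ and $Z_i$.

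Next I would define the reverse map $\Psi$. Given a compatible assignment $\{S_i=s_i\}_{i\geq 1}$, I reconstruct $\tau$ recursively: by property~1 of the SBN definition the root value $s_1=(Y_1,Z_1)$ is a subsplit of $\mathcal{X}$, so $Y_1\cup Z_1=\mathcal{X}$, and I create a root node with two subtrees on leaf sets $Y_1$ and $Z_1$; continuing, at a node carrying subsplit $(Y_i,Z_i)$ with $Y_i$ non-singleton, compatibility says $s_{2i}$ is a bipartition of $Y_i$, which I use to split the $Y_i$-subtree further, and when $Y_i=\{W\}$ I terminate that branch at the leaf $W$ (the value there being the fake subsplit, which carries no choice), and symmetrically for $Z_i$. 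This yields a bifurcating tree whose leaves are labeled bijectively by $\mathcal{X}$, i.e., a rooted tree topology on $\mathcal{X}$.

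Finally I would prove $\Psi\circ\Phi=\mathrm{id}$ and $\Phi\circ\Psi=\mathrm{id}$ by induction on $|\mathcal{X}|$ — more precisely via a strengthened statement about the sub-SBN and subtree rooted at a node with clade $C$, inducting on $|C|$. The base case $|C|=1$ is the single-leaf tree matched with the forced fake subsplit; the inductive step is immediate, since both a rooted tree on $C$ and a compatible sub-assignment are uniquely determined by the choice of the top subsplit $(Y,Z)$ of $C$ together with the corresponding data on $Y$ and $Z$, and $\Phi,\Psi$ act as the identity on that top choice while acting as the inductive bijections below. I expect the only genuine bookkeeping — the mild "obstacle" here — to be the careful treatment of the deterministic fake-subsplit padding: one must confirm that enlarging $\mathcal{B}_\mathcal{X}$ beyond the minimal $\mathcal{B}_\mathcal{X}^\ast$ adds only nodes forced to carry fake subsplits, so it contributes no extra degrees of freedom to the set of compatible assignments and leaves the bijection unaffected.
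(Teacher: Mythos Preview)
Your proposal is correct and follows essentially the same approach as the paper: construct the forward map via the splitting process with fake-subsplit padding, construct the reverse map by stripping fake subsplits and reading off the tree, and argue bijectivity. The paper's version is briefer---it reduces to $\mathcal{B}_\mathcal{X}^\ast$ without elaboration, verifies the depth bound $N-1$ by tracking the maximum clade size level by level, and dispatches bijectivity by noting the construction is deterministic rather than by an explicit induction---but the substance is the same.
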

See a proof of Lemma~\ref{lemma:sbnrep} in Appendix B.
Note that in practice SBNs are often parameterized based on some estimated subsplit supports (see Section \ref{sec:support_estimation} for more details).
Tree topologies that have local splitting patterns that are not covered by those subsplit supports would have zero SBN probabilities.
In other words, compatible node assignments would not necessarily lead to a positive SBN probability of the corresponding tree topologies.
Note that an alternative formulation of an SBN for a choice of subsplit support is naturally described using a directed acyclic graph; see \cite{Jun2023-ib} for more details.

\subsection{Tree Probability Estimation}\label{sec:tree_prob_est}
As Bayesian networks, SBNs can be used to define probabilities for rooted trees based on their corresponding node assignments.
Given the subsplit decomposition $\{s_1,s_2,\ldots\}$ of a rooted tree $\tau$, where $s_1$ is the root subsplit and $\{s_i\}_{i>1}$ are the other subsplits, we define the SBN-induced tree probability of $\tau$ as
\begin{equation}\label{eq:sbn_prob}
p_{\mathrm{sbn}}(T=\tau) = p(S_1=s_1)\prod_{i>1}p(S_i=s_i|S_{\pi_i}=s_{\pi_i})
\end{equation}
where $\pi_i$ denotes the set of indices of the parent nodes of $S_i$ (if we are using the minimal SBN $\mathcal{B}_\mathcal{X}^\ast$ there will only be one parent node).
Note that a naive parameterization of SBNs would require conditional probability distributions (CPDs) for all local conditional models $p(S_i|S_{\pi_i})$ that scales exponentially as the number of taxa $N$ increases.
To reduce the number of parameters and encourage generalization, \emph{conditional probability sharing} is often assumed where the same set of CPDs for parent-child subsplit pairs is shared across the SBN network, regardless of their locations \citep{SBN}.
In practice, this proves effective for a compact parameterization of SBNs (see Section \ref{sec:variational_parameterization} for more details).
Traditionally, Bayesian networks define distributions over all possible assignments.
However, in this setting we only want to consider node assignments that form a tree.
To remedy this issue, we therefore impose the following consistency condition for the associated CPDs of SBNs.

\begin{definition}[Consistent Parameterization]
We say the conditional probability $p(S_i|S_{\pi_i})$ is \emph{consistent} if $p(S_i=s_i|S_{\pi_i}=s_{\pi_i}) > 0$ only when the pair $(S_i=s_i,S_{\pi_i}=s_{\pi_i})$ can be extended to a compatible node assignment of all nodes.
We say an SBN is \emph{consistently parameterized} if all the associated conditional probabilities are consistent.
\end{definition}

The consistency condition restricts the support of SBNs to the compatible node assignments, which together with Lemma \ref{lemma:sbnrep}, guarantee that SBNs provide valid distributions over the rooted tree topology space.
\begin{proposition}\label{prop:consistency}
Let $\mathcal{T}_{\mathcal{X}}$ be the set of all rooted trees with leaf labels $\mathcal{X}$. For any consistently parameterized SBN, $\sum_{\tau\in\mathcal{T}_{\mathcal{X}}} p_{\mathrm{sbn}}(T=\tau) = 1$.
\end{proposition}

The proof of Proposition \ref{prop:consistency} is standard and is provided in Appendix B.
Based on Proposition \ref{prop:consistency}, we can design a family of distributions over the rooted tree topology space using the conditional independence structures in SBNs \citep{SBN}, where
$\mathcal{B}_{\mathcal{X}}^\ast$ serves as a base model (Figure \ref{fig:sbn}, Left).

The SBN framework also generalizes to unrooted trees, which are another popular type of trees in phylogenetic inference that depicts only the relationship between taxa (Figure~\ref{fig:sbnunrooted}, left).
The key idea is to view unrooted trees as rooted trees where the position of the root is treated as a missing variable.
More concretely, unrooted trees can be transformed into (and hence be represented as) a set of rooted trees by placing the virtual roots at the edges.
On the other hand, rooted trees can also be transformed into unrooted trees by removing the roots.
Figure~\ref{fig:sbnunrooted} shows a simple example on a four taxon unrooted tree.
Through this rooting/unrooting operation, the unrooted topology induces an equivalence relation among rooted trees defined as follows.
\begin{definition}[Equivalence Relation]
Let $D$ be the unrooting operator. For any pair of rooted trees $\tau_1$ and $\tau_2$, we say $\tau_1\sim \tau_2$ if $D(\tau_1)$ and $D(\tau_2)$ represent the same unrooted tree, that is, $D(\tau_1)=D(\tau_2)$.
\end{definition}

\begin{figure}[t!]
\begin{center}
\input{figs/bn-unrooted}
\caption{SBNs for unrooted trees.
({\bf Left}): A simple four taxon unrooted tree.
The equivalence class consists of five rooted trees, and each can be obtained by rooting on one of the edges $\{1,2,3,4,5\}$.
({\bf Middle (left)}): Two exemplary rooted trees in the equivalence class when rooting on edges $1$ and $3$.
({\bf Middle (right)}): The corresponding SBN assignments for the two rooted trees.
({\bf Right}): An SBN for the unrooted tree with unobserved root node $S_1$.}
\label{fig:sbnunrooted}
\end{center}
\end{figure}

It is easy to verify that $\sim$ defines an equivalence relation in the rooted tree space, and every unrooted tree $\tau^\mathrm{u}$ corresponds to an equivalence class $[\tau^\mathrm{u}]=\{\tau: D(\tau)=\tau^\mathrm{u}\}$.
We, therefore, define the SBN probability estimates for unrooted trees as the sum of SBN probabilities for the rooted trees in the equivalence class
\begin{equation}\label{eq:sbnunrooted}
p_{\mathrm{sbn}}(T^\mathrm{u}=\tau^{\mathrm{u}}) = \sum_{\tau\in[\tau^{\mathrm{u}}]}p_{\mathrm{sbn}}(\tau).
\end{equation}

As the equivalence classes naturally partition the entire rooted tree space, \eqref{eq:sbnunrooted} also provides a valid probability distribution in the unrooted tree space.
\begin{proposition}
Let $\mathcal{T}_{\mathcal{X}}^{\mathrm{u}}$ be the set of all unrooted trees with leaf labels $\mathcal{X}$. For any consistently parameterized SBN, $\sum_{\tau^\mathrm{u}\in\mathcal{T}_{\mathcal{X}}^{\mathrm{u}}} p_{\mathrm{sbn}}(T^\mathrm{u}=\tau^{\mathrm{u}}) = 1$.
\end{proposition}
\begin{proof}
\[
\sum_{\tau^\mathrm{u}\in\mathcal{T}_{\mathcal{X}}^{\mathrm{u}}} p_{\mathrm{sbn}}(T^\mathrm{u}=\tau^{\mathrm{u}}) = \sum_{\tau^\mathrm{u}\in\mathcal{T}_{\mathcal{X}}^{\mathrm{u}}}\sum_{\tau\in[\tau^{\mathrm{u}}]}p_{\mathrm{sbn}}(\tau)
= \sum_{\tau\in\mathcal{T}_{\mathcal{X}}}p_{\mathrm{sbn}}(\tau) = 1.
\]
\end{proof}

Note that each rooted tree in the equivalence class corresponds to an edge (i.e., the rooting position) of the unrooted tree, $[\tau^\mathrm{u}]$ therefore can be explicitly represented as $\{\tau^{\stkout{\mathrm{u}}}_e: e\in E(\tau^\mathrm{u})\}$, where $\tau^{\stkout{\mathrm{u}}}_e$ means the resulting rooted tree when the root edge is $e$.
We will use this representation in the sequel.
As each edge corresponds to a root subsplit, \eqref{eq:sbnunrooted} can also be viewed as the marginal SBN probability over the unobserved root node $S_1$ (the right plot in Figure~\ref{fig:sbnunrooted}).

Not only do SBNs provide a family of distributions over the rooted/unrooted tree spaces, they also allow efficient tree probability computation algorithms and tree sampling procedures as we introduce next.
All the involved computation complexity is measured with respect to the size of the tree topology, or equivalently, the number of taxa $N$.

\subsection{Tree Probability Computation}\label{sec:tree_prob_comp}
In the previous section, we have illustrated how to use SBNs for tree probability estimation.
We now show that these SBN-induced tree probabilities can be efficiently computed in linear time $O(N)$ as follows.
For rooted trees, we can use a simple tree traversal to collect the required subsplits in \eqref{eq:sbn_prob} and evaluate the product using the conditional probabilities for the corresponding parent-child subsplit pairs in SBNs.
This leads to a linear time algorithm as both the tree traversal and product evaluation take linear time.
For unrooted trees, \eqref{eq:sbnunrooted} requires computing the SBN probabilities as defined in \eqref{eq:sbn_prob} for all rooted trees in the equivalence class.
A naive approach is to run tree traversals for each of these rooted trees and collect the required subsplits $\cup_{e\in E(\tau^\mathrm{u})}\{s_1^{e,\tau^\mathrm{u}}, s_2^{e,\tau^\mathrm{u}},\ldots\}$, where $\{s_1^{e,\tau^\mathrm{u}}, s_2^{e,\tau^\mathrm{u}},\ldots\}$ is the subsplit decomposition for $\tau^{\stkout{\mathrm{u}}}_e$ (the middle plots in Figure~\ref{fig:sbnunrooted}).
With these subsplits, we can evaluate \eqref{eq:sbnunrooted} as follows
\begin{equation}\label{eq:sbn_unrooted_computation}
p_{\mathrm{sbn}}(\tau^{\mathrm{u}}) = \sum_{e\in E(\tau^{u})}p(S_1=s_1^{e,\tau^\mathrm{u}})\prod_{i>1} p(S_i=s_i^{e,\tau^\mathrm{u}}|S_{\pi_i}=s_{\pi_i}^{e,\tau^\mathrm{u}}).
\end{equation}
Although \eqref{eq:sbn_unrooted_computation} provides a valid computation method, this naive approach has a quadratic time complexity as both the tree traversals and sum-product evaluation takes quadratic time, making it difficult to scale up to large trees.

Fortunately, by leveraging the hierarchical structure of phylogenetic trees, we can design a more efficient linear time algorithm for SBN probability estimation of unrooted trees.
Notice that most of the subsplits for rooted trees in the equivalence class and the intermediate partial products involved in \eqref{eq:sbn_unrooted_computation} are shared due to the hierarchical structure of trees, this allows us to collect all the required subsplits and complete the sum-product evaluation within a two-pass sweep through the unrooted tree, similar to the two-pass algorithm for tree-structured graphical models \citep{JudeaPearl,Schadt1998,Willsky2002}.
Without loss of generality, we use the simplest SBN, $\mathcal{B}_{\mathcal{X}}^\ast$, as an example and describe our two-pass algorithm as below.

\begin{figure}[t!]
\begin{center}
\input{figs/message_passing}
\caption{The two-pass algorithm for computing the SBN probabilities of unrooted trees in \eqref{eq:sbn_unrooted_computation}.
({\bf Left}): The conditional probability for the parent-child subsplit pair appeared in the message updating formulas.
The triangles denote subtrees in a simplified form.
The subtrees on the other side of node $k$ are ignored.
({\bf Middle}): The postorder (solid black) and preorder (dashed gray) traversals when node $k$ is chosen as the root node (denoted as a square node).
({\bf Right}): The root subsplit $(V\cup Z, X\cup Y)$ and parent-child subsplit pairs $(X,Y)|(V\cup Z, X\cup Y), (V,Z)|(V\cup Z, X\cup Y)$ for edge $e$.
}
\label{fig:message_passing}
\end{center}
\end{figure}

First, initialize all pairwise messages for the edges to one: $M_{i\rightarrow j}=M_{j\rightarrow i}=1, \forall (i,j)\in E(\tau^\mathrm{u})$.
Next, choose a node as the root node (e.g., node $k$ in the middle plot in Figure~\ref{fig:message_passing}, we will call this a \emph{virtual root}) and make two passes through the tree as follows.
In the first pass, we traverse the tree in a postorder fashion and update the rootward messages when we visit interior node $i$ (except the root node) as follows
\begin{equation}\label{eq:leaf_to_root_message}
M_{i\rightarrow \pi_i} = \prod_{j\in \operatorname{ch}(i)}P_{\pi_i}(j\rightarrow i)M_{j\rightarrow i}
\end{equation}
where $\operatorname{ch}(i)$ means the child nodes of $i$ and $P_{\pi_i}(j\rightarrow i)$ is the conditional probability for the parent-child subsplit pair representing the local splitting pattern of the child node $j$ at node $i$, given the parent node $\pi_i$.
More specifically, let $(X,Y)$ be the corresponding subsplit at node $j$ looking away from the parent node $i$ and $(X\cup Y, Z)$ be the corresponding subsplit at node $i$ looking away from the parent node $\pi_i$, then $P_{\pi_i}(j\rightarrow i) = p((X,Y)|(X\cup Y, Z))$ (see the left plot in Figure~\ref{fig:message_passing}).
In the second pass, we traverse the tree in a preorder fashion and update the leafward messages when we visit interior node $i$ as follows
\begin{equation}\label{eq:message}
M_{i\rightarrow k} = \prod_{j\in \operatorname{ne}(i)\backslash \{k\}} P_k(j\rightarrow i) M_{j\rightarrow i}, \quad \forall\; k \in \operatorname{ch}(i),
\end{equation}
where $\operatorname{ne}(i)$ means the neighborhood of node $i$ and $P_k(j\rightarrow i)$ is defined similarly as $P_{\pi_i}(j\rightarrow i)$ in \eqref{eq:leaf_to_root_message}.
See an illustration in Figure~\ref{fig:message_passing}.
In both phases of the algorithm, $M_{i\rightarrow k}$ accumulates the contributions to SBN probabilities from the subtree rooted at node $i$ looking away from neighbor $k$.
After the second pass, for each edge $e=(i,j)\in E(\tau^\mathrm{u})$, we can aggregate the contributions to $p_{\mathrm{sbn}}(\tau^\mathrm{u})$ from the subtrees on both sides of the edge as $M_{i\rightarrow j}M_{j\rightarrow i}$.
This, together with the subsplit probability $P_r(e)$ and the parent-child subsplit pair conditional probabilities $P_r(i\rightarrow j), P_r(j\rightarrow i)$ associated with the root edge $e$ (the right plot in Figure~\ref{fig:message_passing})\footnote{These root subsplits and parent-child subsplit pairs associated with the edges turn out to be quite useful. We will revisit them later.}, allow us to compute the SBN probability for each rooted tree $\tau^{\stkout{\mathrm{u}}}_e$ in the equivalence class, and hence compute the marginal SBN probability in \eqref{eq:sbn_unrooted_computation} as
\begin{equation}\label{eq:two_pass}
p_{\mathrm{sbn}}(\tau^\mathrm{u}) = \sum_{e\in E(\tau^\mathrm{u})} p_{\mathrm{sbn}}(\tau^{\stkout{\mathrm{u}}}_e) = \sum_{e=(i,j)\in E(\tau^\mathrm{u})} P_r(e) P_r(i\rightarrow j) P_r(j\rightarrow i) M_{i\rightarrow j}M_{j\rightarrow i}.
\end{equation}
Note that the message updates in each iteration of the tree traversals take constant time and the sum-product in \eqref{eq:two_pass} takes linear time, the above two-pass algorithm hence enjoys a linear time complexity.

\subsection{Tree Sampling}
Sampling rooted trees from SBNs is straightforward via ancestral sampling, a method that samples the random variables for the nodes of Bayesian networks in topological order.
More specifically, we start by sampling from the root node; then we sample from the child nodes by conditioning their CPDs to the sampled values of their parent nodes.
We proceed like this until all nodes have been sampled.
According to Lemma \ref{lemma:sbnrep}, the resulting assignment of subsplits uniquely represents a rooted tree topology given that the SBN is consistently parameterized.
Sampling unrooted trees from SBNs is almost identical to sampling rooted trees, but with an additional unrooting operation that transforms the sampled rooted trees into unrooted ones.
As the rooting/unrooting operation defines the correspondence between the unrooted trees and their equivalence classes in the rooted tree space, it is easy to show that this sampling procedure is consistent with the SBN probabilities for unrooted trees defined in \eqref{eq:sbnunrooted}.

\begin{proposition}
Let $T^\mathrm{u}$ be an unrooted tree valued random variable defined implicitly by a procedure that samples and deletes the roots of samples from SBNs (via ancestral sampling).
Then $T^\mathrm{u}$ follows the distribution defined in  \eqref{eq:sbnunrooted}.
\end{proposition}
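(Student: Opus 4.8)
The plan is to reduce the statement to two facts: (i) ancestral sampling from a (consistently parameterized) SBN produces a rooted tree whose law is exactly $p_{\mathrm{sbn}}$ on the rooted tree space; and (ii) the law of $T^\mathrm{u}=D(T)$ is the pushforward of that law under the unrooting operator $D$, which coincides with~\eqref{eq:sbnunrooted} because the equivalence classes $[\tau^\mathrm{u}]$ are precisely the fibers $D^{-1}(\tau^\mathrm{u})$.

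First I would establish (i). Ancestral sampling draws $S_1=s_1$ from $p(S_1=\cdot)$ and then, visiting the nodes of the (fixed-structure) SBN in topological order, draws $s_i$ from $p(S_i=\cdot\mid S_{\pi_i}=s_{\pi_i})$, with the deterministic fake-subsplit assignments imposed on the dashed-gray part of the network. By construction the probability of obtaining a given full assignment $\{S_i=s_i\}_{i\ge 1}$ equals $p(S_1=s_1)\prod_{i>1}p(S_i=s_i\mid S_{\pi_i}=s_{\pi_i})$, i.e.\ the joint law of the Bayesian network. Since every conditional $p(S_i\mid S_{\pi_i})$ is consistent, at each step the sampler places zero mass on values $s_i$ that cannot be extended to a compatible full assignment; an induction on depth (the same one used for Proposition~\ref{prop:consistency}) then shows that the sampled assignment is, with probability one, a compatible full assignment. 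By Lemma~\ref{lemma:sbnrep}, compatible assignments are in one-to-one correspondence with rooted tree topologies on $\mathcal{X}$, so the recovered rooted tree $T$ is almost surely well defined, and matching the sampling probability with~\eqref{eq:sbn_prob} gives $\Pr(T=\tau)=p_{\mathrm{sbn}}(T=\tau)$ for every rooted $\tau$; Proposition~\ref{prop:consistency} confirms this is a genuine probability distribution over rooted trees.

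Next comes (ii), which is just the pushforward identity together with the fact that $\sim$ partitions the rooted tree space into the finite classes $[\tau^\mathrm{u}]$. For any unrooted tree $\tau^\mathrm{u}$,
\[
\Pr\!\left(T^\mathrm{u}=\tau^\mathrm{u}\right)=\Pr\!\left(D(T)=\tau^\mathrm{u}\right)=\Pr\!\left(T\in[\tau^\mathrm{u}]\right)=\sum_{\tau\in[\tau^\mathrm{u}]}\Pr(T=\tau)=\sum_{\tau\in[\tau^\mathrm{u}]}p_{\mathrm{sbn}}(\tau),
\]
where the second equality uses $D(\tau)=\tau^\mathrm{u}\iff\tau\in[\tau^\mathrm{u}]$ (the definition of the equivalence relation) and the third uses that $[\tau^\mathrm{u}]$ is a finite set of disjoint atoms. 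The right-hand side is exactly the definition~\eqref{eq:sbnunrooted}, so $T^\mathrm{u}$ follows that distribution.

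The only genuine subtlety — and the step I expect to require the most care — is (i): verifying that ancestral sampling over the fixed SBN structure (including the deterministic fake-splitting subgraphs) lands in the set of compatible assignments almost surely, so that the rooted tree is always recoverable. This is exactly where consistent parameterization is needed, and the argument parallels the proof of Proposition~\ref{prop:consistency}; once it is in place, everything else is bookkeeping about pushforward measures.
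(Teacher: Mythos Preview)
Your proof is correct and follows essentially the same approach as the paper. The paper's own proof is a terse two-line version of your step~(ii), taking your step~(i)---that ancestral sampling from the SBN yields a rooted tree with law $p_{\mathrm{sbn}}$---as implicit; your more careful justification of~(i) via consistent parameterization and Lemma~\ref{lemma:sbnrep} is a welcome elaboration but not a different argument.
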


\begin{proof}
Let $\tau^\mathrm{u}$ be a realization of $T^\mathrm{u}$.
Notice that the set of rooted trees that can be transformed into $\tau^\mathrm{u}$ via the unrooting operation is exactly the equivalence class $[\tau^\mathrm{u}]=\{\tau^{\stkout{\mathrm{u}}}_e:e\in E(\tau^\mathrm{u})\}$.
The probability that $T^\mathrm{u}$ takes value of $\tau^\mathrm{u}$ therefore is
\[
p(T^\mathrm{u}=\tau^\mathrm{u}) = \sum_{\tau\in[\tau^\mathrm{u}]}p_{\mathrm{sbn}}(\tau) = p_{\mathrm{sbn}}(T^\mathrm{u}=\tau^\mathrm{u}).
\]
\end{proof}

Note that ancestral sampling for Bayesian networks allows us to sample from SBNs in linear time $O(N)$ and the unrooting operation takes constant time, sampling rooted/unrooted trees from SBNs therefore can be done in linear time $O(N)$.

\section{Variational Bayesian Phylogenetic Inference}\label{sec:vbpi}
In this section, we present our variational approach for approximate Bayesian phylogenetic inference.
We first introduce the general variational framework with approximating distributions and training objectives.
We then discuss some practical and efficient parameterization strategies for our variational approximations.
Finally, we introduce several efficient computational techniques, including low-variance stochastic gradient estimators for variational training of the model parameters and a linear algorithm for computing the gradient of SBN parameters which is essential for stochastic gradient estimation.

\subsection{A General Framework}\label{sec:framework}
As SBNs can provide a family of distributions over tree topologies with efficient probability estimation and sampling algorithms, they stand out as an essential building block to perform variational inference for phylogenetics.
Given a family of SBN-based approximating distributions $Q_{\bm{\phi}}(\tau)$ over phylogenetic tree topologies, where $\bm{\phi}$ denotes the CPDs of SBNs, we can prescribe our variational approximation as
\[
Q_{\bm{\phi},\bm{\psi}}(\tau,\bm{q}) = Q_{\bm{\phi}}(\tau)Q_{\bm{\psi}}(\bm{q}|\tau)
\]
where $Q_{\bm{\psi}}(\bm{q}|\tau)$ is another family of conditional probability distributions over the branch lengths given the tree topology $\tau$ (see more detailed construction in Section \ref{sec:variational_parameterization}), with $\bm{\psi}$ being the branch length variational parameters.
Variational inference now amounts to finding a candidate of this family that minimizes the Kullback-Leibler (KL) divergence to the exact posterior
\begin{equation}\label{eq:vbpi}
\bm{\phi}^\ast,\bm{\psi}^\ast = \argmin_{\bm{\phi},\bm{\psi}} D_{\mathrm{KL}}\left(Q_{\bm{\phi},\bm{\psi}}(\tau,\bm{q})\|p(\tau,\bm{q}|\bm{Y})\right)
\end{equation}
which is equivalent to maximizing the evidence lower bound (ELBO)
\begin{equation}\label{eq:ELBO_one_sample}
L(\bm{\phi},\bm{\psi}) = \mathbb{E}_{Q_{\bm{\phi},\bm{\psi}}(\tau,\bm{q})}\log\left(\frac{p(\bm{Y}|\tau,\bm{q})p(\tau,\bm{q})}{Q_{\bm{\phi}}(\tau)Q_{\bm{\psi}}(\bm{q}|\tau)}\right)\leq \log p(\bm{Y}).
\end{equation}

While the above single-sample lower bound is commonly used in the VI literature, it is prone to heavily penalize samples that fail to explain the observations.
As a result, the variational approximation tends to focus on the high-probability areas of the exact posterior, leading to simple local variational posteriors that underestimate the uncertainty of the exact posteriors\footnote{This is a behavior known as mode-seeking.}.
Given that phylogenetic posteriors are often multimodal and require strong exploration ability of the inference algorithms to get good performance, we propose an alternative training objective,  a multi-sample lower bound that averages over multiple samples ($K>1$) when estimating the marginal likelihood \citep{IWAE,VIMCO} as follows
\begin{equation}\label{eq:ELBO_multiple_sample}
L^{K}(\bm{\phi},\bm{\psi}) = \mathbb{E}_{Q_{\bm{\phi},\bm{\psi}}(\tau^{1:K},\bm{q}^{1:K})}\log \left(\frac1K\sum_{i=1}^K\frac{p(\bm{Y}|\tau^i,\bm{q}^i)p(\tau^i,\bm{q}^i)}{Q_{\bm{\phi}}(\tau^i)Q_{\bm{\psi}}(\bm{q}^i|\tau^i)}\right)\leq \log p(\bm{Y})
\end{equation}
where $Q_{\bm{\phi},\bm{\psi}}(\tau^{1:K},\bm{q}^{1:K})\equiv \prod_{i=1}^KQ_{\bm{\phi},\bm{\psi}}(\tau^i,\bm{q}^i)$.
Compared to the single-sample lower bound, the multi-sample lower bounds encourage mode-covering behaviors, and hence facilitate the exploration efficiency in the phylogenetic tree space, especially when getting through the less likely territories between the posterior modes.
The multi-sample lower bound gets tighter as the number of samples $K$ increases and converges to the true marginal likelihood in the limit of infinitely many samples \citep{IWAE}.
Note that tighter lower bounds may also deteriorate the training of the variational approximations \citep{Rainforth19}, and so we suggest using a moderate $K$ in practice (see the experiments in Section \ref{sec:experiments_unrooted} and Section \ref{sec:experiments_rooted} for more details).
In the rest of the paper, we use the multi-sample lower bounds and refer to them as lower bounds for short.

\subsection{Subsplit Support Estimation}\label{sec:support_estimation}
As the CPDs of SBNs are associated with all consistent parent-child subsplit pairs, a complete parameterization of SBNs would require an exponentially increasing number of parameters.
In practice, however, we find that the parent-child subsplit pairs observed from trees with relatively high posterior probabilities are a small subset of the entire set of subsplit pairs.
Therefore, if we can find a sufficiently large collection of subsplits from these favorable trees and restrict the \emph{support} of CPDs (where the conditional probabilities are allowed to be nonzero) accordingly, we can construct SBNs that cover most trees in the high posterior area with an affordable number of parameters.
In this section, we explore several easily implemented options for subsplit support estimation on relatively small data sets (that allows us to compare with the exact posteriors), and leave a discussion of other possibilities to the conclusion.

More concretely, we tested four heuristic approaches that provide candidate tree topologies for this kind of subsplit support estimation for SBNs: bootstrapping (BS) based on three commonly used phylogeny reconstruction methods that are (1) neighbor joining (NJ), (2) maximum parsimony (MP) and (3) maximum likelihood (ML); (4) short-run MCMC\@.
The phylogenetic bootstrap performs bootstrapping on the sites of a multiple sequence alignment, which corresponds to the independent-evolution-among-sites assumption embodied in~\eqref{eq:phylo_likelihood}; see \citet{Felsenstein85} for details.
All experiments were conducted on a real data set DS1 \citep{Hedges90} and the results were averaged over 10 replicates for each approach.
Bootstrapping with neighbor joining and maximum parsimony were performed using PAUP$^\ast$ \citep{PAUP}.
Bootstrapping with maximum likelihood was run in UFBoot \citep{UFBOOT}.
All bootstrapping approaches were run with 8,000 replicates.
The classical MCMC Bayesian analyses were done in MrBayes \citep{Ronquist12}, including an extremely long ``golden run'' of 10 billion iterations (sampled every 1,000 iterations with the first $25\%$ discarded as burn-in) to form the ground truth and short runs of one million iterations (sampled every 100 iterations with the first $20\%$ discarded, yielding 8,000 samples) to collect candidate tree topologies.
We assume a uniform prior on the tree topology, an i.i.d. exponential prior with rate parameter 10.0 for the branch lengths and the simple \citet{Jukes69} substitution model in the Bayesian setting.

\begin{figure}[t!]
\begin{center}
\includegraphics[width=\textwidth]{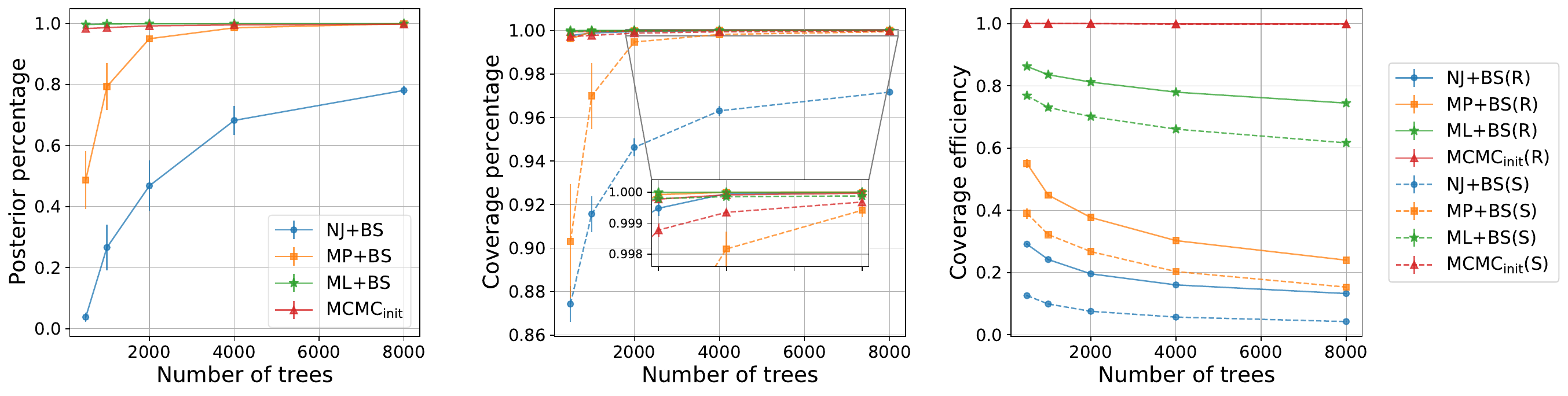}
\caption{A comparison on four heuristic approaches for subsplit support estimation. ({\bf Left}): The sum of posterior probabilities of trees covered by SBNs with subsplit support estimated by different methods.
({\bf Middle}): The coverage percentages for root subsplits and subsplit pairs given by different methods.
({\bf Right}): The coverage efficiencies (the portion of root subsplits/subsplit pairs from the candidate trees that also appear in the ground truth collections) for root subsplits and subsplit pairs given by different methods.
The letters R and S in parentheses represent the root subsplit and subsplit pair respectively.
The error bars show one standard error based on 10 independent runs.}
\label{fig:support_estimation}
\end{center}
\end{figure}

We used three different criteria to evaluate the performance of different subsplit support estimation methods.
The left plot in Figure~\ref{fig:support_estimation} shows the covered posterior percentages of the resulting SBNs as a function of the number of trees used to estimate the support.
The covered posterior percentages were estimated by aggregating the posterior probabilities of those sampled trees from the MCMC golden run that have non-zero SBN probabilities, i.e., are covered by the SBNs.
We see that both bootstrapping with the maximum likelihood (ML+BS) and short run MCMC (MCMC$_{\mathrm{init}}$) approaches achieve higher posterior coverage with smaller numbers of trees than the other methods.
Bootstrapping with maximum parsimony (MP+BS) manages to provide comparable posterior coverage, but requires more trees.
Trees obtained from bootstrapping with neighbor joining (NJ+BS) are inadequate to cover the high posterior area.
The middle plot in Figure~\ref{fig:support_estimation} shows the coverage percentages of both root subsplits (solid) and subsplit pairs (dashed) for different methods, where the coverage percentages are defined as the portions of the ground truth root subsplits/subsplit pairs that appear in the candidate trees.
We see that all methods performed quite well in the case of root subsplits.
In the more challenging case of subsplit pairs, bootstrapping with maximum likelihood and short run MCMC, again, outperformed the other methods.
The right plot in Figure~\ref{fig:support_estimation} investigates the coverage efficiency as a function of the number of candidate trees, where the coverage efficiency was estimated in terms of the portion of root subsplits/subsplit pairs from the candidate trees that also appear in the ground truth collections.
Clearly, we can see that short run MCMC provides the best coverage efficiency (close to 100$\%$), followed by bootstrapping with maximum likelihood, and bootstrapping with maximum parsimony and neighbor joining performed the worst.
Moreover, the coverage efficiency decreases as the number of trees increases for all methods except short run MCMC, so for these other methods this indicates that large coverage percentages may come with large portions of bad root subsplits/subsplit pairs\footnote{This would increase the complexity of the search space and hence makes the training of variational approximations more difficult.}, especially for low quality candidate trees.
Overall, compared to distance-based methods, likelihood-based approaches are more suitable for subsplit support estimation, in terms of posterior coverage, root subsplit and subsplit pair coverage percentage and efficiency.
We will use bootstrapping with maximum likelihood and short run MCMC for subsplit support estimation in our experiments.
Note that our empirical results here is on DS1, which is a relatively small data set.
We leave a more thorough investigation on how these findings trend as tree sizes increase to future work (see Section \ref{sec:conclusion} for more discussion).

\subsection{Variational Parameterization}\label{sec:variational_parameterization}
Given an appropriate set of candidate trees acquired via the aforementioned approaches, we can simply traverse these trees and collect root subsplits and parent-child subsplit pairs along the way to form the subsplit support.
Now denote the set of root subsplits in the support as $\mathbb{S}_\mathrm{r}$ and the set of parent-child subsplit pairs collected during the tree traversals as $\mathbb{S}_\mathrm{ch|pa}$.
Let $\phi_s$ be the associated parameter for each element $s \in \mathbb{S}_\mathrm{r}\cup \mathbb{S}_\mathrm{ch|pa}$.
We define the CPDs of SBNs according to the following equations
\[
p(S_1=s_1) = \frac{\exp(\phi_{s_1})}{\sum_{s_r\in\mathbb{S}_r}\exp(\phi_{s_r})},\quad p(S_i=s|S_{\pi_i}=t) = \frac{\exp(\phi_{s|t})}{\sum_{s\in\mathbb{S}_{\cdot|t}}\exp(\phi_{s|t})}
\]
where $\mathbb{S}_{\cdot|t}$ denotes the set of child subsplits that are compatible with the greater or smaller subclade in parent subsplit $t$, depending on whether $S_i$ is the top child ($i$ is even) or the bottom child ($i$ is odd) of its parent respectively. Note that this parameterization implicitly assumes conditional probability sharing and is automatically consistent as the root subsplits and parent-child subsplit pairs are collected from tree topologies.

To accommodate the non-negative nature of the branch length parameters in phylogenetic models, we use the Log-normal distribution that has a positive support as our variational distribution.
Furthermore, given a tree topology $\tau$, we assume the branch lengths of $\tau$ are independently distributed.
This leads to a simple approximate conditional density $Q_{\bm{\psi}}(\bm{q}|\tau)$ that takes the following form
\begin{equation}\label{eq:branch_length_distribution}
Q_{\bm{\psi}}(\bm{q}|\tau) = \prod_{e\in E(\tau)} p^{\mathrm{Lognormal}}(q_e \mid \mu(e,\tau),\sigma(e,\tau)).
\end{equation}
Notice that a naive parameterization of \eqref{eq:branch_length_distribution} that assumes independent parameters for different tree topologies would require a large number of parameters, especially when the high posterior areas are diffuse.
To alleviate this issue, we propose to parameterize the branch length distributions over the tree topology space via certain shared local splitting patterns.
More specifically, instead of having separate parameters for each tree topology, we use a shared pool of parameters for these local splitting patterns that maps the tree topologies to their corresponding branch length variational parameters.
We provide two concrete examples as follows.

\subsubsection{Split Based Parameterization}
A simple choice of such local splitting pattern is the \emph{split}, which is a bipartition $(X_1,X_2)$ of the entire leaf label set $\mathcal{X}$ such that $X_1\cup X_2=\mathcal{X},X_1\cap X_2 = \emptyset$.
On a tree topology $\tau$, each edge naturally corresponds to a split, the bipartition that consists of the leaf labels from both sides of the edge.
Denote the corresponding split for edge $e$ of topology $\tau$ as $e/\tau$.
Let $\psi_{e/\tau}^\mu$ and $\psi_{e/\tau}^\sigma$ be the associated mean and variance parameters for the corresponding root subsplit in $\mathbb{S}_\mathrm{r}$; the split-based parameterization then takes the following form
\begin{equation}\label{eq:split_parameterization}
\mu(e,\tau) = \psi_{e/\tau}^\mu, \quad \sigma(e,\tau) = \psi_{e/\tau}^\sigma.
\end{equation}

\subsubsection{Primary Subsplit Pair Based Parameterization}\label{sec:psp}
A potential problem with the above split-based parameterization \eqref{eq:split_parameterization} is that it implicitly assumes the branch lengths in different trees have the same distribution as long as they correspond to the same root subsplit, completely ignoring the variation of the branch length distributions among tree topologies.
To capture this variation, one can use a more sophisticated parameterization where the variational parameters $\mu$ and $\sigma$ may depend on other topological structures.
More specifically, we use the following local structure for each edge in addition to the root subsplits.

\begin{definition}[Primary Subsplit Pair]
Let $e$ be an edge of a phylogenetic tree $\tau$ and say the corresponding root subsplit is $e/\tau = (W,Z)$ where $W\cup Z = \mathcal{X}$.
Assume that at least one of $W$ and $Z$, say $W$, contains more than one leaf label and hence has a subsplit $(W_1,W_2)$.
We call the parent-child subsplit pair $(W_1,W_2)|(W,Z)$ a primary subsplit pair.
\end{definition}

The right plot in Figure~\ref{fig:message_passing} provides an example of primary subsplit pairs for a given edge $e$, where the root subsplit is $(V\cup Z, X\cup Y)$ and the primary subsplit pairs corresponding to the subsplits on either side of $e$ are $(X, Y)|(V\cup Z, X\cup Y)$ and $(V, Z)|(V\cup Z, X\cup Y)$.
Denote the primary subsplit pair(s) for edge $e$ of tree $\tau$ as $e/\!\!/\tau$.
Notice that primary subsplit pairs are also parent-child subsplit pairs, we can assign additional parameters for each primary subsplit pair in $\mathbb{S}_{\mathrm{ch}|\mathrm{pa}}$ and sum all the associated variational parameters for edge $e$ to form the mean and variance of the corresponding branch length distribution (see Figure~\ref{fig:subsplitpair} in Appendix A for an illustration):
\begin{equation}\label{eq:psp_parameterization}
\mu(e,\tau) = \psi_{e/\tau}^\mu + \sum_{s\in e/\!\!/\tau}\psi_s^\mu,\quad \sigma(e,\tau) = \psi_{e/\tau}^\sigma + \sum_{s\in e/\!\!/\tau}\psi_s^\sigma.
\end{equation}
Compared to \eqref{eq:split_parameterization}, \eqref{eq:psp_parameterization} allows contributions from the primary subsplit pairs that contain more topological information and hence could provide more flexible between-tree approximations.
It is worth emphasizing that the above structured parameterizations \eqref{eq:split_parameterization} and \eqref{eq:psp_parameterization} not only reduce the number of required parameters compared to the naive parameterization that assumes independent parameters for different tree topologies, but also enable joint learning of the branch length distributions across tree topologies due to those shared local splitting patterns.

\subsection{Stochastic Gradient Estimators}\label{sec:stochastic_gradient_estimators}
Now that we have constructed our variational distributions for the phylogenetic posteriors, we can find the best approximation by maximizing the lower bound in \eqref{eq:ELBO_multiple_sample} via stochastic gradient ascent.
However, the naive stochastic gradient estimator obtained by direct differentiation, also known as the score function gradient estimator \citep{Rubinstein83, Glynn90, Williams92, Kleijnen96}, is often of very large variance and is impractical for our purpose.
Fortunately, various variance reduction techniques have been introduced in recent years including the control variates \citep{Glynn02, Paisley12, Ranganath13, NVIL, VIMCO} for general latent variables and the reparameterization trick/pathwise derivative \citep{Rubinstein92, Glasserman91, Pflug96, Glasserman13, VAE, Rezende14, Titsias14} for continuous latent variables.

In what follows, we apply these techniques to different components of our latent variables and derive efficient gradient estimators with much lower variance, respectively.
In addition, we also consider a closely related stable gradient estimator that is based on an alternative variational objective.
More detailed derivations can be found in Appendix C.

\subsubsection{The VIMCO Estimator}
Let $f_{\bm{\phi},{\bm{\psi}}}(\tau, \bm{q}) = \frac{p(\bm{Y}|\tau, \bm{q})p(\tau, \bm{q})}{Q_{\bm{\phi}}(\tau)Q_{\bm{\psi}}(\bm{q}|\tau)}$.
The naive gradient of the lower bound w.r.t. the tree variational parameters $\bm{\phi}$ has the form (see a derivation in Appendix C):
\[
\nabla_{\bm{\phi}} L^K(\bm{\phi}, {\bm{\psi}}) = \mathbb{E}_{Q_{\bm{\phi},{\bm{\psi}}}(\tau^{1:K},\;\bm{q}^{1:K})}\sum_{j=1}^K\left(\hat{L}^K(\bm{\phi},{\bm{\psi}})-\tilde{w}^j\right)\nabla_{\bm{\phi}}\log Q_{\bm{\phi}}(\tau^j)
\]
where $\hat{L}^K(\bm{\phi},{\bm{\psi}}) = \log\left(\frac1K\sum_{i=1}^Kf_{\bm{\phi},{\bm{\psi}}}(\tau^i, \bm{q}^i)\right)$ and $\tilde{w}^j = \frac{f_{\bm{\phi},{\bm{\psi}}}(\tau^j, \bm{q}^j)}{\sum_{i=1}^Kf_{\bm{\phi},{\bm{\psi}}}(\tau^i, \bm{q}^i)}$ are the stochastic lower bound and the self-normalized importance weight, respectively.
While the importance weights are bounded, the stochastic lower bound could be problematic since it does not distinguish the samples according to their fitness for the data and assigns the same ``learning signal'' \citep{NVIL,VIMCO} to all of them.
Moreover, the magnitude of this learning signal can be extremely large, especially in the early stage of training when the samples from the variational distribution explain the data poorly.
By leveraging these independent and identically distributed multiple samples, \citet{VIMCO} defined a different local learning signal for each sample and proposed a variance-reduced gradient estimate, which leads to the following gradient estimate for the tree variational parameter $\bm{\phi}$:
\begin{equation}\label{eq:vimco}
\nabla_{\bm{\phi}} L^K(\bm{\phi},{\bm{\psi}}) = \mathbb{E}_{Q_{\bm{\phi},{\bm{\psi}}}(\tau^{1:K},\;\bm{q}^{1:K})}\sum_{j=1}^K\left(\hat{L}_{j|-j}^K(\bm{\phi},{\bm{\psi}})-\tilde{w}^j\right)\nabla_{\bm{\phi}}\log Q_{\bm{\phi}}(\tau^j)
\end{equation}
where
\[
\hat{L}_{j|-j}^K(\bm{\phi},{\bm{\psi}}) :=  \hat{L}^K(\bm{\phi},{\bm{\psi}}) - \log\frac1K\left(\sum_{i\neq j}f_{\bm{\phi},{\bm{\psi}}}(\tau^i,\bm{q}^i) + \hat{f}_{\bm{\phi},{\bm{\psi}}}(\tau^{-j},\bm{q}^{-j})\right)
\]
is the per-sample local learning signal, with $\hat{f}_{\bm{\phi},{\bm{\psi}}}(\tau^{-j},\bm{q}^{-j})$ being some estimate of $f_{\bm{\phi},{\bm{\psi}}}(\tau^j,\bm{q}^j)$ for sample $j$ using the rest of the samples (e.g., the geometric mean).
This gives the following VIMCO estimator
\begin{equation}\label{eq:vimcoestimator}
\nabla_{\bm{\phi}} L^K(\bm{\phi},{\bm{\psi}}) \simeq \sum_{j=1}^K\left(\hat{L}_{j|-j}^K(\bm{\phi},{\bm{\psi}})-\tilde{w}^j\right)\nabla_{\bm{\phi}}\log Q_{\bm{\phi}}(\tau^j)
\text{ with }\; \tau^j, \bm{q}^j \mathrel{\overset{\makebox[0pt]{\mbox{\normalfont\tiny\sffamily iid}}}{\sim}} Q_{\bm{\phi},{\bm{\psi}}}(\tau,\;\bm{q}).
\end{equation}

\subsubsection{The Reparameterization Trick}
The above VIMCO estimator also works for the branch length gradient.
However, as the branch lengths are continuous latent variables, we can use an alternative approach, the reparameterization trick, to estimate the gradient.
Note that the Log-normal distribution has a simple reparameterization: $q\sim \mathrm{Lognormal}(\mu,\sigma^2) \Leftrightarrow q = \exp(\mu+\sigma\epsilon),\; \epsilon \sim \mathcal{N}(0,1)$, we can rewrite the lower bound as follows
\[
L^K(\bm{\phi},{\bm{\psi}}) = \mathbb{E}_{Q_{\bm{\phi},\bm{\epsilon}}(\tau^{1:K},\bm{\epsilon}^{1:K})}\log\left(\frac1K\sum_{j=1}^K\frac{p(\bm{Y}|\tau^j,g_{\bm{\psi}}(\bm{\epsilon}^j|\tau^j))p(\tau^j, g_{\bm{\psi}}(\bm{\epsilon}^j|\tau^j))}{Q_{\bm{\phi}}(\tau^j)Q_{\bm{\psi}}(g_{\bm{\psi}}(\bm{\epsilon}^j|\tau^j)|\tau^j)}\right)
\]
where $g_{\bm{\psi}}(\bm{\epsilon}|\tau) = \exp(\bm{\mu}_{\bm{\psi}, \tau}+\bm{\sigma}_{\bm{\psi},\tau}\odot\bm{\epsilon})$.
Then the gradient of the lower bound w.r.t. ${\bm{\psi}}$ is (see Appendix C for derivation)
\begin{equation}\label{eq:reparameterization}
\nabla_{\bm{\psi}} L^K(\bm{\phi},{\bm{\psi}}) = \mathbb{E}_{Q_{\bm{\phi},\bm{\epsilon}}(\tau^{1:K},\bm{\epsilon}^{1:K})} \sum_{j=1}^K \tilde{w}^j \nabla_{\bm{\psi}} \log f_{\bm{\phi},{\bm{\psi}}}(\tau^j,g_{\bm{\psi}}(\bm{\epsilon}^j|\tau^j))
\end{equation}
where $\tilde{w}^j = \frac{f_{\bm{\phi},{\bm{\psi}}}(\tau^j, g_{\bm{\psi}}(\bm{\epsilon}^j|\tau^j))}{\sum_{i=1}^Kf_{\bm{\phi},{\bm{\psi}}}(\tau^i, g_{\bm{\psi}}(\bm{\epsilon}^i|\tau^i))}$ is the same self-normalized importance weight as in equation \eqref{eq:vimco}.
Therefore, we can form the Monte Carlo estimator of the gradient
\begin{equation}\label{eq:reparaestimator}
\nabla_{\bm{\psi}} L^K(\bm{\phi},{\bm{\psi}})  \simeq \sum_{j=1}^K \tilde{w}^j \nabla_{\bm{\psi}} \log f_{\bm{\phi},{\bm{\psi}}}(\tau^j,g_{\bm{\psi}}(\bm{\epsilon}^j|\tau^j))
\text{ with }\; \tau^j \mathrel{\overset{\makebox[0pt]{\mbox{\normalfont\tiny\sffamily iid}}}{\sim}} Q_{\bm{\phi}}(\tau),\; \bm{\epsilon}^j \mathrel{\overset{\makebox[0pt]{\mbox{\normalfont\tiny\sffamily iid}}}{\sim}} p_{\bm{\epsilon}}(\bm{\epsilon}).
\end{equation}

\subsubsection{Self-normalized Importance Sampling Estimator}
In addition to the standard variational formulation \eqref{eq:vbpi}, one can reformulate the optimization problem by minimizing the reversed KL divergence, which is equivalent to maximizing the likelihood of the variational approximation
\begin{equation}\label{eq:rws}
Q_{{\bm{\phi}}^\ast,{\bm{\psi}}^\ast}(\tau,{\bm{q}}), \text{ where } {\bm{\phi}}^\ast,{\bm{\psi}}^\ast = \argmax_{{\bm{\phi}}, {\bm{\psi}}} \tilde{L}({\bm{\phi}},{\bm{\psi}}),\;
\tilde{L}({\bm{\phi}},{\bm{\psi}}) = \mathbb{E}_{p(\tau,{\bm{q}}|{\bm{Y}})}\log Q_{{\bm{\phi}},{\bm{\psi}}}(\tau, {\bm{q}}).
\end{equation}
We can use an importance sampling estimator to compute the gradient of the objective
\begin{align}
\nabla_{{\bm{\phi}}} \tilde{L}({\bm{\phi}},{\bm{\psi}}) &= \mathbb{E}_{p(\tau,{\bm{q}}|{\bm{Y}})}\nabla_{{\bm{\phi}}}\log Q_{{\bm{\phi}},{\bm{\psi}}}(\tau, {\bm{q}}) = \frac1{p({\bm{Y}})}\mathbb{E}_{Q_{{\bm{\phi}},{\bm{\psi}}}(\tau, {\bm{q}})} \frac{p({\bm{Y}}|\tau,{\bm{q}})p(\tau,{\bm{q}})}{Q_{{\bm{\phi}}}(\tau)Q_{{\bm{\psi}}}({\bm{q}}|\tau)}\nabla_{{\bm{\phi}}}\log Q_{{\bm{\phi}}}(\tau) \nonumber \\
& \simeq \sum_{j=1}^K\tilde{w}^j\nabla_{{\bm{\phi}}}\log Q_{{\bm{\phi}}}(\tau^j) \;\text{ with }\; \tau^j, \bm{q}^j \mathrel{\overset{\makebox[0pt]{\mbox{\normalfont\tiny\sffamily iid}}}{\sim}} Q_{{\bm{\phi}},{\bm{\psi}}}(\tau,\;{\bm{q}})\label{eq:rwsgrad}
\end{align}
with the same importance weights $\tilde{w}^j$ as in \eqref{eq:vimco} (a detailed derivation can be found in Appendix C).
This can be viewed as a multi-sample generalization of the wake-sleep algorithm \citep{WS} and was first used in the \emph{reweighted wake-sleep} algorithm \citep{RWS} for training deep generative models.
We therefore call the gradient estimator in \eqref{eq:rwsgrad} the RWS estimator.
Like the VIMCO estimator, the RWS estimator also provides gradients for the branch lengths.
However, we find in practice the gradient estimator in \eqref{eq:reparaestimator} that uses the reparameterization trick is more useful and often leads to faster convergence, although it uses a different optimization objective.
A better understanding of this phenomenon would be an interesting subject of future research.

All stochastic gradient estimators introduced above can be used in conjunction with stochastic optimization methods such as stochastic gradient ascent (SGA) \citep{SGA} or some of its adaptive variants (e.g. Adam \citep{ADAM}) to maximize the lower bounds. See algorithm \ref{alg:vbpi} below for a basic variational Bayesian phylogenetic inference (VBPI) approach.
In the experiments, we will use the VIMCO estimator and RWS estimator for the tree topology parameter $\bm{\phi}$ and the reparameterization trick for the branch length parameter $\bm{\psi}$.

\begin{algorithm*}[t!]
\caption{The variational Bayesian phylogenetic inference (VBPI) algorithm. }
\begin{algorithmic}[1]
\State $\bm{\phi}, \bm{\psi} \gets$ Initialize parameters
\While{not converged}
  \State $\tau^1, \ldots, \tau^K \gets $ Random samples from the current approximating tree distribution $Q_{\bm{\phi}}(\tau)$
  \State $\bm{\epsilon}^1, \ldots, \bm{\epsilon}^K \gets$ Random samples from the multivariate standard normal distribution $\mathcal{N}(\bm{0}, \bm{I})$
  \State $\bm{g} \gets \nabla_{\bm{\phi},\bm{\psi}} L^K(\bm{\phi},\bm{\psi}; \tau^{1:K}, \bm{\epsilon}^{1:K})$ (Use any gradient estimator from Section \ref{sec:stochastic_gradient_estimators})
  \State $\bm{\phi}, \bm{\psi} \gets $ Update parameters using gradients $\bm{g}$ (e.g. SGA)
\EndWhile
\State \Return $\bm{\phi}, \bm{\psi}$
\end{algorithmic}
\label{alg:vbpi}
\end{algorithm*}

\subsection{Efficient Computation of SBN Gradients}\label{sec:sbn_grad_comp}
In this section, we describe some efficient computational methods for $\nabla_{\bm{\phi}}\log Q_{\bm{\phi}}(\tau)$, which is an essential ingredient for the stochastic gradient estimators derived in Section \ref{sec:stochastic_gradient_estimators}.
Similarly to the tree probability computation, the computation of $\nabla_{\bm{\phi}}\log Q_{\bm{\phi}}(\tau)$ for rooted trees is straightforward and can be done in linear time as follows:
\begin{align}\label{eq:sbn_grad_rooted}
\nabla_{\bm{\phi}}\log Q_{\bm{\phi}}(\tau) &= \nabla_{\bm{\phi}}\log p(S_1=s_1) + \sum_{i>1}\nabla_{\bm{\phi}}\log p(S_i=s_i|S_{\pi_i}=s_{\pi_i})\nonumber\\
&=\sum_{s\in\mathrm{pcsp}(\tau)} \nabla_{\bm{\phi}}\log p(s)
\end{align}
where $\{s_i\}_{i\geq 1}$ is the subsplit decomposition of $\tau$ and $\mathrm{pcsp}(\tau)=\{s_1\}\cup \{s_i|s_{\pi_i}\}_{i>1}$ is the set of the root subsplit and parent-child subsplit pairs of $\tau$.

For unrooted trees, we have
\begin{align}\label{eq:sbn_grad_naive}
\nabla_{\bm{\phi}}\log Q_{\bm{\phi}}(\tau^\mathrm{u})  &= \nabla_{\bm{\phi}} \log \sum_{\tau\in[\tau^\mathrm{u}]}p_{\mathrm{sbn}}(\tau) =\sum_{\tau\in [\tau^\mathrm{u}]}\frac{\nabla_{\bm{\phi}}p_{\mathrm{sbn}}(\tau)}{\sum_{\tau\in[\tau^\mathrm{u}]}p_{\mathrm{sbn}}(\tau)}\nonumber\\
& = \sum_{\tau\in [\tau^\mathrm{u}]}\frac{p_{\mathrm{sbn}}(\tau)}{\sum_{\tau\in[\tau^\mathrm{u}]}p_{\mathrm{sbn}}(\tau)}\nabla_{\bm{\phi}}\log p_{\mathrm{sbn}}(\tau)\nonumber\\
& = \sum_{e\in E(\tau^\mathrm{u})}w_e\nabla_{\bm{\phi}}\log p_{\mathrm{sbn}}(\tau^{\stkout{\mathrm{u}}}_e)
\end{align}
where $w_e=\frac{p_{\mathrm{sbn}}(\tau^{\stkout{\mathrm{u}}}_e)}{\sum_{e\in E(\tau^\mathrm{u})}p_{\mathrm{sbn}}(\tau^{\stkout{\mathrm{u}}}_e)}$ is the rooting probability for edge $e$ of $\tau^\mathrm{u}$ induced by the SBN.
Thanks to the efficient two-pass algorithm introduced in Section \ref{sec:tree_prob_comp}, we can compute $\{p_{\mathrm{sbn}}(\tau^{\stkout{\mathrm{u}}}_e):e\in E(\tau^\mathrm{u})\}$ and hence the edge rooting probabilities $\{w_e:e\in E(\tau^\mathrm{u})\}$ in linear time.
However, a naive implementation of \eqref{eq:sbn_grad_naive} still requires quadratic time since $\nabla_{\bm{\phi}}\log p_{\mathrm{sbn}}(\tau^{\stkout{\mathrm{u}}}_e)$ for each edge takes linear time.

Fortunately, by properly utilizing the structure of the tree topology, \eqref{eq:sbn_grad_naive} can also be computed in linear time.
A key observation is that if we expand $\nabla_{\bm{\phi}}\log p_{\mathrm{sbn}}(\tau^{\stkout{\mathrm{u}}}_e)$ as in \eqref{eq:sbn_grad_rooted} for each edge $e$, most of the gradient terms $\nabla_{\bm{\phi}}\log p(s)$ for the parent-child subsplit pairs will be shared across different edges.
Therefore, we can switch the order of summation and accumulate the coefficients for the gradient terms first as follows:
\begin{align}\label{eq:sbn_grad_rearranged}
\nabla_{\bm{\phi}}\log Q_{\bm{\phi}}(\tau^\mathrm{u}) &= \sum_{e\in E(\tau^\mathrm{u})}w_e\sum_{s\in\mathrm{pcsp}(\tau^{\stkout{\mathrm{u}}}_e)}\nabla_{\bm{\phi}}\log p(s)\nonumber\\
&= \sum_{s\in \cup_{e\in E(\tau^\mathrm{u})}\mathrm{pcsp}(\tau^{\stkout{\mathrm{u}}}_e)}\nabla_{\bm{\phi}}\log p(s)\sum_{e\in E(\tau^\mathrm{u})} w_e\delta_{s\in\mathrm{pcsp}(\tau^{\stkout{\mathrm{u}}}_e)}.
\end{align}
Let $w_s=\sum_{e\in E(\tau^\mathrm{u})} w_e\delta_{s\in\mathrm{pcsp}(\tau^{\stkout{\mathrm{u}}}_e)}$.
For the root subsplits and primary subsplit pairs, there is a unique edge $e$ such that $s\in\mathrm{pcsp}(\tau^{\stkout{\mathrm{u}}}_e)$, and so $w_s=w_e$.
For other parent-child subsplit pairs $s$, there is a corresponding subtree of $\tau^\mathrm{u}$ that is rooted at the parent node and looks toward the child node and we denote it as $\tau^\mathrm{u}|s$ (see the left plot in Figure~\ref{fig:message_passing}).
It turns out that $s$ is a parent-child subsplit pair of $\tau^{\stkout{\mathrm{u}}}_e$ if and only if $e$ is not an edge of the subtree $\tau^\mathrm{u}|s$.
Therefore, $w_s$ can be obtained from the cumulative probability of the root distribution along the edges of the tree topology which can be computed via a postorder tree traversal (with a virtual root as in Section \ref{sec:tree_prob_comp}) that accumulates the edge rooting probabilities.
Let $\bar{w}_e$ be the cumulative probability at edge $e$.
Depending on whether the virtual root $r^{\mathrm{v}}$ is in the subtree $\tau^\mathrm{u}|s$ or not, we can compute $w_s$ as follows:
\begin{equation}
w_s = \left\{
\begin{array}{ll}
\bar{w}_e, & r^{\mathrm{v}} \text{ is in } \tau^\mathrm{u}|s\\
1-\bar{w}_e + w_e, & \text{ otherwise}.
\end{array}
\right.
\end{equation}
Finally, for each edge of the tree topology, we can collect the corresponding root subsplit, primary subsplit pairs and regular parent-child subsplit pairs with all possible orientations induced by the relative position of the edge w.r.t the virtual root, and hence complete the summation over the entire set of $\cup_{e\in E(\tau^\mathrm{u})}\mathrm{pcsp}(\tau^{\stkout{\mathrm{u}}}_e)$ within a simple tree traversal.
This concludes our linear time algorithm for the gradient computation of SBN parameters for unrooted trees.

\section{Experiments on Unrooted Phylogenies}\label{sec:experiments_unrooted}
Throughout this section we evaluate the effectiveness and efficiency of our variational framework for Bayesian phylogenetic inference on unrooted phylogenetic trees, with comparisons to results that were obtained from intensive MCMC runs.
The simplest SBN (the one with a full and complete binary tree structure) was used for the phylogenetic tree topology variational distribution; we have found it to provide sufficiently accurate approximation.
The subsplit supports were estimated from either short MCMC runs using MrBayes \citep{Ronquist12} or ultrafast maximum likelihood phylogenetic bootstrap using UFBoot \citep{UFBOOT}.
We investigate the performance of the VIMCO estimator and the RWS estimator with different variational parameterizations for the branch length distributions, while varying the number of samples in the training objective to see how these affect the quality of the variational approximations.
All models were implemented in PyTorch \citep{Pytorch} with the Adam optimizer \citep{ADAM}.
Results were collected after 200,000 parameter updates.
The reported KL divergences are over the discrete collection of phylogenetic tree topologies, from the estimated posteriors $\hat{Q}(\tau)$ (e.g., SBN distributions from VBPI or simple relative frequencies from standard MCMC) to the ground truth posterior $p(\tau|\bm{Y})$, and they are computed as follows
\begin{equation}\label{eq:kl_topology}
D_{\mathrm{KL}}\left(p(\tau|\bm{Y})\|\hat{Q}(\tau)\right) = \sum_{\tau} p(\tau|\bm{Y}) \log\left(\frac{p(\tau|\bm{Y})}{\max\{\hat{Q}(\tau),\xi\}}\right),
\end{equation}
where $\xi$ is a small positive number that ensures that the denominator is positive (We used the machine precision for $\xi$ in our experiments. See other choices of $\xi$ in Appendix D).
A low KL divergence means a high quality approximation to the exact posterior of the tree topologies.
The code is available at \url{https://github.com/zcrabbit/vbpi-torch}.

We test the proposed variational Bayesian phylogenetic inference (VBPI) algorithms on 8 real data sets that are commonly used to benchmark phylogenetic MCMC methods \citep{Lakner08, Hhna2012-pm,Larget2013-et, Whidden2015-eq}.
We concentrate on the most challenging part of the phylogenetic model: joint learning of the tree topologies and the branch lengths.
We assume a uniform prior on the tree topology, an i.i.d. exponential prior with rate parameter 10.0 for the branch lengths and the simple \citet{Jukes69} substitution model.
We consider two different variational parameterizations for the branch length distributions as introduced in Section \ref{sec:variational_parameterization}. In the first case, we use the simple split-based parameterization that assigns parameters to the splits associated with the edges of the trees. In the second case, we assign additional parameters for the primary subsplit pairs (PSP) to better capture the between-tree variation.
We formed our ground truth posterior from an extremely long MCMC run of $10$ billion iterations (sampled each 1,000 iterations with the first $25\%$ discarded as burn-in) using MrBayes \citep{Ronquist12}, and gathered the subsplit supports from trees generated by 10 replicates of 10,000 runs of the ultrafast maximum likelihood bootstrap \citep{UFBOOT}.
Following \citet{NF}, we use a simple annealed version of the lower bound which was found to provide better results. The modified bound is:
\[
L^K_{\beta_t}({\bm{\phi}},{\bm{\psi}}) = \mathbb{E}_{Q_{{\bm{\phi}},{\bm{\psi}}}(\tau^{1:K},\; \bm{q}^{1:K})}\log\left(\frac1K\sum_{i=1}^K\frac{[p(\bm{Y}|\tau^i, \bm{q}^i)]^{\beta_t} p(\tau^i, \bm{q}^i)}{Q_{{\bm{\phi}}}(\tau^i)Q_{\bm{\psi}}(\bm{q}^i|\tau^i)}\right)
\]
where $\beta_t = \min(1, 0.001+ t/100,000)$ is an inverse temperature schedule that goes from $0.001$ to $1$ after $99,900$ iterations. We trained the variational approximations with VIMCO and RWS using 10 and 20 sample objectives and used Adam with the default learning rate $0.001$ for both methods.
We used an exponential learning rate schedule that decays the learning rate by a factor of 0.75 every 20,000 iterations.
To encourage exploration in the tree topology space at the beginning, we initialize the variational parameters $\bm{\phi}$ for SBNs to zero (which leads to uniformly distributed CPDs).
The same settings are applied to all data sets.

\subsection{Tree Topology Posterior Approximation}
In this section, we examine the performance of VBPI for tree topology posterior approximation.
The left and middle plots in Figure~\ref{fig:DS1} show the resulting KL divergence to the ground truth on DS1 as a function of the number of parameter updates (the right plot will be described below in Section \ref{sec:lb_mle}).
The results for methods that adopt the simple split-based parameterization of variational branch length distributions are shown in the left plot.
We see that the performance of all methods improves significantly as the number of samples $K$ increases.
The middle plot shows the results using PSP for variational parameterization, which clearly indicates that a better modeling of between-tree variation of the branch length distributions is beneficial for all methods with different stochastic gradient estimators and numbers of samples $K$.
Specifically, PSP enables more flexible branch length distributions across tree topologies which makes the learning task much easier, as evidenced by the considerably smaller gaps between the methods.
In both cases, we see that RWS learns faster at the beginning while VIMCO performs better in the end.
To benchmark the learning efficiency of VBPI, we also compare to MrBayes 3.2.5 \citep{Ronquist12}, a standard MCMC implementation for phylogenetic analysis on unrooted trees.
We ran MrBayes with 4 chains and 10 runs for two million iterations, sampling every 100 iterations.
For each run, we computed the KL divergence to the ground truth every 50,000 iterations with the first $25\%$ discarded as burn-in.\footnote{Note that the ``burn-in" here is only to trim the early samples and is different from the much longer standard burn-in for MCMC runs that ensures convergence.}
For a relatively fair comparison (in terms of the computational complexity of likelihood evaluations),
we compare 10 (i.e.\ 2$\cdot$20/4) times the number of MCMC iterations with the number of 20-sample objective VBPI iterations.\footnote{The extra factor of 2/4 is because the likelihood and the gradient can be computed together in twice the time of a likelihood computation (see \citet{Schadt1998}) and we run MCMC with 4 chains.}
As most MCMC moves can leverage likelihood vector caching for a more efficient implementation, we also provide runtime comparisons where the runtime of VBPI is lower bounded by the likelihood and gradient computation in Appendix D.
Although MCMC converges faster at the start, we see that VBPI methods (especially those with PSP) quickly surpass MCMC and arrive at good approximations with much fewer iterations.
This is because VBPI updates the approximate distributions of tree topologies (e.g., SBNs) and branch lengths on the fly, which in turn allows guided exploration in the phylogenetic tree space.
As a commonly used summarization of tree samples, we also compare the majority-rule consensus trees obtained from VBPI and the ground truth MCMC run.
The results on DS1 are presented in Figure~\ref{fig:consensus} in Appendix F.
We see that VBPI and MCMC provide the same majority-rule consensus tree.

\begin{figure}[t!]
\begin{center}
\includegraphics[width=\textwidth]{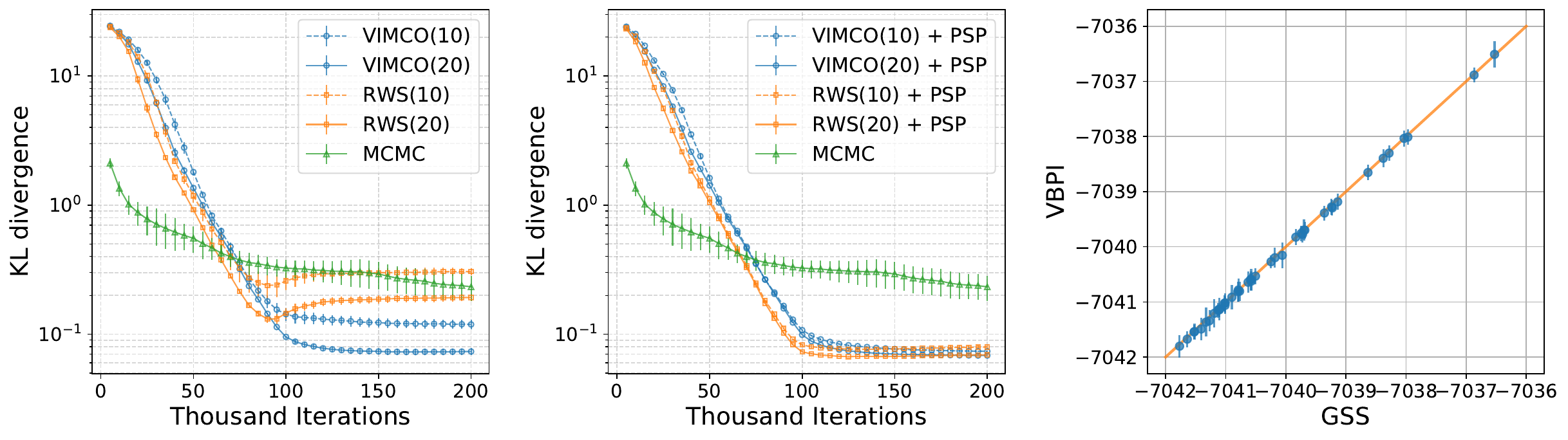}
\caption{Performance on DS1. ({\bf Left}): KL divergence for methods that use the simple split-based parameterization for the branch length distributions. ({\bf Middle}): KL divergence for methods that use PSP. ({\bf Right}): Per-tree marginal likelihood estimation (in nats): VBPI vs the Generalized stepping-stone estimator (GSS, described in the text). The number in parentheses specifies the number of samples used in the training objective. KL results are obtained from 10 independent runs. The marginal likelihood estimates for VBPI on each tree were obtained using 1,000 samples and the error bar shows one standard deviation over 100 independent runs.
}\label{fig:DS1}
\end{center}
\vspace{-0.2in}
\end{figure}

\begin{table}[t!]
\caption{Variational lower bound and marginal likelihood estimation of different methods across 8 benchmark data sets for Bayesian phylogenetic inference.
The marginal likelihood estimates of all variational methods were obtained via importance sampling using 1,000 samples, and the results (in units of nats) were averaged over 100 independent runs with standard deviation in parentheses.
We ran the annealed importance sampling stepping-stone estimator (SS)~\citep{SS} in MrBayes using default settings, with 4 chains for 10,000,000 iterations and sampled every 100 iterations.
The results were averaged over 10 independent runs with standard deviation in parentheses.
SS is not a variational method thus there are no entries for it in the upper part of the table.
}\label{tab:realdata}
\scalebox{0.72}{
\begin{threeparttable}
\begin{tabular}{lllllll}
\toprule
data sets & (\#Taxa,\#Sites) & VIMCO($\boldsymbol{10}$) & VIMCO($\boldsymbol{20}$) & VIMCO($\boldsymbol{10}$)$+$PSP & VIMCO($\boldsymbol{20}$)$+$PSP & SS\\
\midrule
&& \multicolumn{5}{c}{Variational Lower Bound (nats)}\\
\cmidrule{3-7}\\[-0.7em]
DS1 & (27, 1949)& -7112.39(1.44) & -7112.60(2.10) & -7111.17(0.99) & -7111.57(1.10) & -- \\
DS2 & (29, 2520)& -26369.89(0.95) & -26369.80(0.94) & -26369.44(0.55) & -26369.57(0.75) & -- \\
DS3 & (36, 1812)& -33736.81(0.34) & -33736.94(0.45) & -33736.70(0.36) & -33736.82(0.37) & --\\
DS4 & (41, 1137)& -13333.00(0.77) & -13333.25(0.84) & -13332.32(0.43) & -13332.70(0.45) &-- \\
DS5 & (50, 378) &  -8219.00(0.24) & -8129.30(0.32) & -8218.44(0.19) & -8218.74(0.21) &--\\
DS6 & (50, 1133)& -6729.78(0.74) & -6730.40(0.77) & -6729.21(0.36) & -6730.24(0.97) &-- \\
DS7 & (59, 1824)& -37335.44(0.13) & -37335.81(0.14) & -37335.25(0.12) & -37335.52(0.15) &-- \\
DS8 & (64, 1008)& -8659.45(0.65) & -8661.09(1.21) & -8655.55(0.40) & -8655.96(0.52) &-- \\
\midrule
&& \multicolumn{5}{c}{Marginal Likelihood (nats)}\\
\cmidrule{3-7}\\[-0.7em]
DS1 & (27, 1949)& -7108.48(0.26) & -7108.46(0.23) & -7108.41(0.17) & -7108.41(0.16) & -7108.42(0.18)\\
DS2 & (29, 2520)& -26367.73(0.10) & -26367.74(0.09) & -26367.73(0.09) & -26367.73(0.09) &  -26367.57(0.48)\\
DS3 & (36, 1812)& -33735.13(0.12) & -33735.12(0.12) & -33735.12(0.13) & -33735.12(0.11) & -33735.44(0.50)\\
DS4 & (41, 1137)& -13330.02(0.31) & -13330.01(0.26) & -13329.96(0.24) & -13329.95(0.20) & -13330.06(0.54)\\
DS5 & (50, 378)  & -8214.78(0.56) & -8214.77(0.53) & -8214.67(0.44) & -8214.64(0.46) & -8214.51(0.28)\\
DS6 & (50, 1133)& -6724.55(0.56) & -6724.54(0.52) & -6724.45(0.43) & -6724.41(0.43) & -6724.07(0.86) \\
DS7 & (59, 1824)& -37332.16(0.51) & -37332.16(0.38) & -37332.05(0.33) & -37332.03(0.31) & -37332.76(2.42)\\
DS8 & (64, 1008)& -8652.93(1.09) & -8651.33(0.87) & -8650.66(0.56) & -8650.64(0.53) & -8649.88(1.75)\\
\bottomrule  
\end{tabular}

\end{threeparttable}
}
\end{table}

\subsection{Lower Bound and Marginal Likelihood Estimation}\label{sec:lb_mle}
To evaluate the overall approximation accuracy of VBPI for phylogenetic posteriors, we report the standard single-sample evidence lower bounds (ELBO) for variational approximations trained via the VIMCO gradient estimator in Table~\ref{tab:realdata}\footnote{Note that this is different from the multi-sample lower bounds that we used as the training objectives.}.
As before, we see that the ELBOs are all significantly improved with the PSP parameterization, in terms of both the estimated means and variances.
Although having a larger number of samples $K$ in the training objective has been shown to facilitate topological exploration for better tree topology posterior estimation, we see that it does not necessarily improve the overall posterior approximation in terms of the ELBO (as observed and analyzed in \citet{Rainforth19}).
However, as we see next, tighter lower bounds indeed bias the variational approximations towards providing better marginal likelihood estimates, especially for the more challenging learning without the PSP parameterization.

The variational approximations provided by VBPI can be readily used to perform importance sampling for phylogenetic inference (see more details in Appendix G).
Thanks to the structured amortization for the branch length distributions, VBPI can provide variational approximations on branch lengths for any tree topology that is covered by the subsplit support.
We, therefore, can use them to estimate the marginal likelihood of tree topologies via importance sampling.
The right plot in Figure~\ref{fig:DS1} compares VBPI using VIMCO with 20-sample objective and PSP to the state-of-the-art generalized stepping-stone (GSS) \citep{GSS} algorithm for estimating the marginal likelihood of trees in the $95\%$ credible set of DS1.
For GSS, we used 50 power posteriors and for each power posterior we ran 1,000,000 MCMC iterations, sampling every 1,000 iterations with the first $10\%$ discarded as burn-in.
The reference distribution \citep[a distribution that matches the posterior more closely than the prior distribution; ][]{GSS} for GSS was obtained from an independent Gamma approximation using the maximum a posteriori estimate.
The GSS estimates were computed using the phylogenetic package \texttt{physher} \citep{Fourment2014} (\url{https://github.com/4ment/physher}).
We see that the estimates given by VBPI closely match GSS estimates (the scatter points are close to the orange diagonal line) and have small variance in general.
Similarly, we can use VBPI to estimate the marginal likelihood of the data (i.e., the model evidence) via importance sampling.
Table~\ref{tab:realdata} (the lower part) also shows the marginal likelihood estimates using different VIMCO approximations and one of the state-of-the-art methods, the stepping-stone (SS) algorithm \citep{SS}, which unlike GSS, is able to also integrate out tree topologies.
For each data set, all methods provide estimates for the same marginal likelihood, with better approximation leading to lower variance.
We see that VBPI using 1,000 samples is already competitive with SS using 100,000 samples and provides estimates with much less variance (hence more reproducible and reliable).
Increasing the number of samples $K$ is helpful for providing stabler marginal likelihood estimates, especially for learning without the PSP parameterization.
With the extra flexibility enabled by PSP, the demand for larger number of samples used in the training objective has been greatly alleviated, making it possible to achieve high quality variational approximations with less samples.

\subsection{Benchmarking Scalability Using Influenza Data}\label{sec:curated_data_benchmark}

\begin{figure}[t!]
\begin{center}
\includegraphics[width=\textwidth]{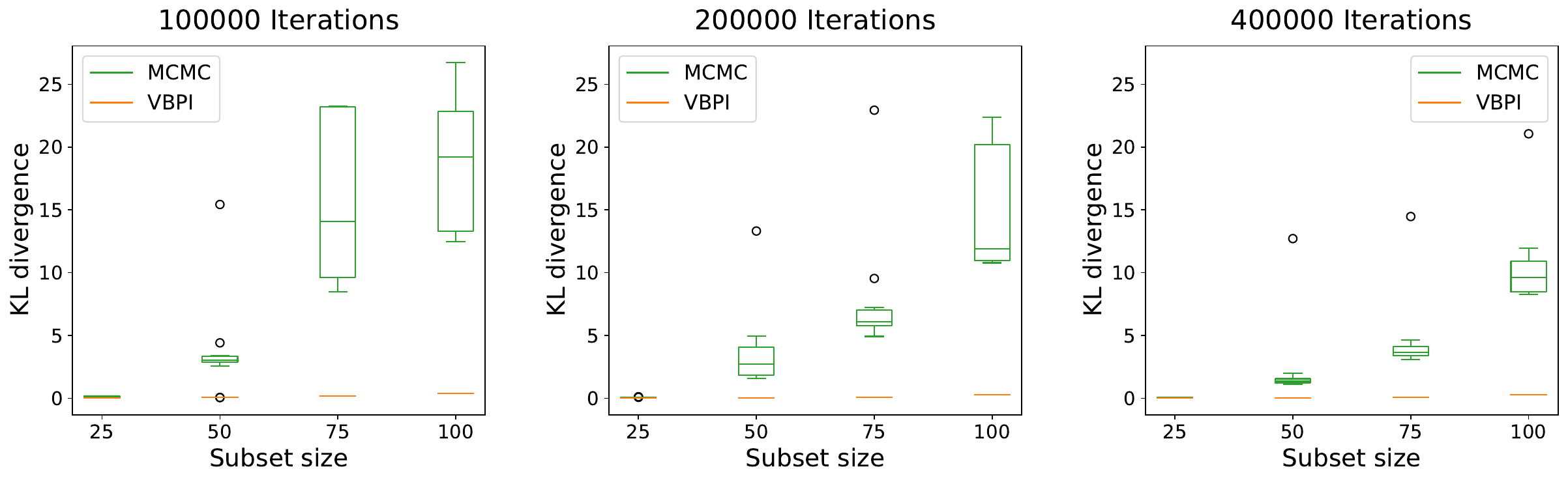}
\caption{Scalability of VBPI and MCMC on Influenza Data. ({\bf Left}): KL divergence after $100,000$ iterations. ({\bf Middle}): KL divergence after $200,000$ iterations. ({\bf Right}): KL divergence after $400,000$ iterations. KL results were obtained from 10 independent runs. 
}\label{fig:flu}
\end{center}
\vspace{-0.2in}
\end{figure}

\begin{table}[t!]
\caption{Marginal likelihood estimation of VBPI and SS across 4 benchmark data sets with nested hemagglutinin sequences.
The marginal likelihood estimates of VBPI and SS were obtained and the results were presented the same way as in Section \ref{sec:lb_mle}.}\label{tab:benchmark}
\begin{center}
\scalebox{1.0}{
\begin{tabular}{lllll}
\toprule
Methods  & Subset size 25 & Subset size 50 & Subset size 75 & Subset size 100\\
\midrule
VBPI  & -13378.38(0.06) & -18615.40(0.16) & -23681.85(0.27) & -28556.96(0.36)  \\
SS & -13378.23(0.24)& -18615.82(1.57) & -23647.14(13.25) & -28176.80(47.16) \\
\bottomrule  
\end{tabular}
}
\end{center}
\end{table}

To benchmark the scalability of VBPI compared to MCMC, we curated a set of influenza data with an increasing number, i.e. 25, 50, 75, 100, of nested hemagglutinin (HA) sequences.
The sequences were obtained from the NIAID Influenza Research Database (IRD)~\citep{Zhang2017-lp} through the web site at \url{https://www.fludb.org/}, downloading all complete HA sequences that passed quality control, which were then subset to H7 sequences, and further downsampled using the Average Distance to the Closest Leaf (ADCL) criterion~\citep{Matsen2013-ue}.
The purpose of this curation was to get sequence subsets with a suitable amount of phylogenetic uncertainty to benchmark the scalability of VBPI against MCMC, which are representative of moderately-dense sampling of viral sequences.
The sequence subsets are available in the \texttt{vbpi-torch} GitHub repository.
As before, we formed our ground truth posteriors from extremely long MCMC runs of 10 billion iterations (sampled each 1,000 iterations with a burn-in rate of $25\%$) using MrBayes.
For VBPI, we used $K=10$ samples in the training objective with PSP branch length parameterization and the VIMCO gradient estimator for training.
The subsplit supports for VBPI were estimated based on 10 independent short MCMC runs of 1,000,000 iterations (sampled each 100 iterations with the first $25\%$ discarded) respectively.
For a relatively fair comparison to MCMC, as before, we ran MrBayes with $4$ chains and 5 (i.e.\ 2$\cdot$10/4) times the number of iterations when compared to VBPI, and sampled every 100 iterations with the first $25\%$ discarded as burn-in.
Figure \ref{fig:flu} shows the KL divergence of the estimated posteriors from VBPI and MCMC to the ground truth at 100,000, 200,000, and 400,000 VBPI iterations.
We see that VBPI scales much better than MCMC as the number of sequences increases, especially for a small computational budget (100,000 iterations).
The number of iterations that VBPI takes for convergence appears to be quite steady across different sequence subsets, while MCMC requires many more, especially for large number of sequences.
We also tested the performance of VBPI and SS on marginal likelihood estimation and the results are presented in Table \ref{tab:benchmark}.
We see that VBPI and SS provides similar marginal likelihood estimates when the subset size is small.
As the subset size increases, SS tends to provide higher marginal likelihood estimates than VBPI.
However, the variances of these estimates also increases drastically, indicating the decreasing reliability of those marginal likelihood estimates.
In contrast, the variances of VBPI estimates are much smaller and more stable.
Note that SS has been found to overestimate the marginal likelihood when diffuse priors are used \citep{Baele16} due to the notorious difficulty of MCMC sampling from vague priors.
Here we are following standard practice for unrooted Bayesian phylogenetics by using a uniform distribution on tree topologies, which certainly counts as a diffuse prior.
Given that the SBN approximations are close to the tree topology posteriors (KL divergence is small, see Figure \ref{fig:flu}) and the support of Lognormal distributions covers all positive real numbers, the smaller variances of VBPI estimates here would imply more reliable estimates.
Also, we did observe in our experiments that the estimates given by SS are often dominated by some ``large'' values from one of the multiple runs as it uses the \texttt{logmeanexp} function to summarize the results from multiple runs.
That may explain why SS tends to overestimate the marginal likelihood when the variance is large.
As importance sampling scales exponentially in the KL divergence between the proposal and the target \citep{chatterjee18}, this challenge is also present when using variational approximations for importance sampling.
However, it should be less of a problem in our setting as the divergence between variational approximation and the posterior is much smaller than that between the prior and the posterior for SS; we leave a more thorough investigation for future work.

\section{VBPI for Time Measured Phylogenies}
The VBPI framework introduced in Section \ref{sec:vbpi} is mainly designed for unrooted phylogenetic models.
When it comes to population genetic inference, rooted time-measured phylogenetic trees (also known as \emph{time trees}) are adopted to allow estimation of divergence times between sequences and the underlying population dynamics \citep{Pybus2000, Drummond05, Sagulenko18}.
The ``branch lengths'' of a time tree, therefore, correspond directly to elapsed time and each node of the tree is associated with a calendar time that reflects its known or inferred date\footnote{These rescaled branch lengths will be translated back into standard branch lengths that reflect the amount of molecular evolution via molecular clock models. We will revisit it later in Section \ref{sec:vp_rooted}.}.
Compared to unrooted trees, time trees are often subject to more complicated constraints induced by the additional sampling times for the data sequences and the evolutionary direction of the tree topologies.
As a result, the priors used for time trees are more complicated than those for unrooted trees, and the variational approximations need to be more carefully designed to handle these constraints efficiently.
In this section, we will describe several essential ingredients that extend the VBPI framework to time tree models.
We start with a brief introduction to the coalescent model that serves as a popular prior for time trees, followed by two typical coalescent priors that allow efficient computation of the coalescent likelihoods.
We then discuss a tree-based reparameterization for constructing variational approximations to the constrained posteriors of time trees, and appropriate variational parameterization techniques for branch length distributions over tree topologies.

\subsection{Coalescent Background}\label{sec:coalescent_likelihood}
Kingman's coalescent model \citep{Kingman1982} is a commonly used prior for time trees, providing a stochastic process that produces genealogies relating the observed molecular sequences.
Given a random population sample of $N$ sequences, the process starts at a sampling time $t=0$ and proceeds backward in time as $t$ increases, merging lineages one pair at a time until the time to the most recent common ancestor (TMRCA) of the sample is reached.
Those merging times are called \emph{coalescent times}.
The coalescent model is parameterized in terms of the \emph{effective population size}, a dynamic model parameter that controls the way that different lineages merge along time.
Let $N_e(t)$ be the time-dependent effective population size.
Suppose we observe a time tree of $N$ sequences sampled at time $0$, and the internal nodes have coalescent times $t_{N-1} < \cdots < t_1$.
The joint density of the coalescent times $t_N=0<t_{N-1} < \cdots < t_1$ is obtained by multiplying the conditional probabilities for each coalescent event \citep{Griffiths1994}
\begin{equation}\label{eq:isochronous_coalescent}
p(\bm{t}|N_e(t)) = \prod_{k=2}^N\frac{A_k}{N_e(t_{k-1})}\exp\left(-\int_{t_k}^{t_{k-1}}\frac{A_k}{N_e(t)}dt\right),
\end{equation}
where $\bm{t}=(t_1,\ldots, t_N)$ and $A_k={{k}\choose{2}}$.
The sampling time may be different as well, see more details on coalescent models with \emph{heterochronous} sampling times in Appendix E.
As the node times $\bm{t}$ are often called node heights, we will use node heights in the sequel.

\subsection{Coalescent Priors}\label{sec:coalescent_prior}
The coalescent densities introduced above require appropriate forms for $N_e(t)$ to be properly estimated.
There are many prior distributions to model the evolution of the population size dynamics over time, known as coalescent priors \citep{Pybus2000, Minin2008, Gill2012}.
In this work, we consider the following two commonly used coalescent priors.

\subsubsection{Constant Population Size}
This model assumes that the population size has remained constant through time, that is,
\[
N_e(t) = N_e=\exp(\gamma_e), \quad t\geq 0
\]
where $\gamma_e$ is the only parameter. In Bayesian settings, we may assume that $\gamma_e$ has a $\mathcal{N}(\mu_0,\sigma_0^2)$ prior.

\subsubsection{Skyride}\label{sec:skyride}
As the true parametric form for $N_e(t)$ is usually unknown, non-parametric coalescent priors are often adopted to provide flexible modeling of complicated demographic trajectories.
Let $t_{k_{N-1}} < t_{k_{N-2}} < \cdots < t_{k_1}=t_1$ denote the coalescent times for the internal nodes.
The Skyride \citep{Minin2008} is a non-parametric model that assumes $N_e(t)$ can change its value only at coalescent times,
\[
N_e(t) = \exp(\gamma_n), \quad t_{k_n} < t \leq t_{k_{n-1}}, \quad n=2, \ldots, N
\]
where $\bm{\gamma}=(\gamma_2,\ldots,\gamma_N)$ are model parameters.
Moreover, to encourage the smoothness of the population size dynamics over time, \citet{Minin2008} introduces a Gaussian Markov random field (GMRF) prior that penalizes the differences between successive components of $\bm{\gamma}$ as
\begin{equation}\label{eq:gmrf}
p(\bm{\gamma}|\lambda) \propto \lambda^{\frac{N-2}{2}} \exp\left(-\frac{\lambda}{2}\sum_{n=2}^{N-1}\frac{(\gamma_{n+1}-\gamma_n)^2}{\delta_n}\right)
\end{equation}
where $\lambda$ is the overall precision of the GMRF and $\{\delta_n\}_{n=2}^{N-1}$ are temporal smoothing coefficients.
As no prior knowledge about the smoothness of the effective population size trajectory (i.e.\ function in terms of time) is available, a relatively uninformative gamma prior  is often assigned to the GMRF precision parameter $\lambda$,
\begin{equation}\label{eq:skyride_gamma_prior}
p(\lambda) = \frac{b^a}{\Gamma(a)} \lambda^{a-1}e^{-b\lambda}.
\end{equation}
Note that the gamma prior \eqref{eq:skyride_gamma_prior} is a conjugate prior to the GMRF model \eqref{eq:gmrf}, we therefore can integrate out $\lambda$ to obtain the marginal prior for $\bm{\gamma}$ \citep{BDA},
\begin{equation}
p(\bm{\gamma}) \propto \frac{b^a}{\Gamma(a)}\cdot \frac{\Gamma(a+\frac{N-2}{2})}{\left(b+\frac12\sum_{n=2}^{N-1}\frac{(\gamma_{n+1}-\gamma_n)^2}{\delta_n}\right)^{a+\frac{N-2}2}}.
\end{equation}

These piecewise constant population size trajectories allow easy computation of the coalescent densities introduced in Section \ref{sec:coalescent_likelihood} which we denote as $p(\bm{t}|\bm{\gamma})$ in the sequel.

\subsection{Posterior Approximation and Variational Parameterization}\label{sec:vp_rooted}
As the node heights $\bm{t}$ on a rooted time tree $\tau$ measure the duration of evolution in terms of time, molecular clock models are often employed to transform the node heights to branch lengths that reflect the amount of evolution in terms of the expected number of substitutions per site.
For the sake of simplicity, we use the strict clock model that assumes a constant evolutionary rate $r$ across different branches, leaving more flexible clock models (e.g.~\cite{Drummond2010-xi}) to future work.
Given the evolutionary rate $r$, the branch length from the parent node $i$ to the child node $j$ is $q_{ij} = r(t_i - t_j)$.
We, therefore, denote the branch lengths $\bm{q}$ as a function of $\bm{t}$ and $r$: $\bm{q}=\bm{q}(\bm{t},r)$.
With appropriate choices of the coalescent prior $p(\bm{\gamma})$ as described in Section \ref{sec:coalescent_prior}, and the rate prior $p(r)$, the posterior distribution can be written as
\begin{equation}
p(\tau,\bm{t},\bm{\gamma},r|\bm{Y}) \propto p(\bm{Y}|\tau, \bm{q}(\bm{t},r)) p(\bm{t}|\bm{\gamma}) p(\bm{\gamma})p(r).
\end{equation}
Similarly as before, we prescribe a family of variational distributions as
\begin{equation}\label{eq:mean_field_time_tree}
Q_{\bm{\phi},\bm{\psi},\bm{\omega},\nu}(\tau,\bm{t},\bm{\gamma},r) = Q_{\bm{\phi}}(\tau)Q_{\bm{\psi}}(\bm{t}|\tau)Q_{\bm{\omega}}(\bm{\gamma}|\tau)Q_{\nu}(r|\tau)
\end{equation}
where $ Q_{\bm{\phi}}(\tau)$ represents a family of SBN-based distributions over rooted tree topologies, and $Q_{\bm{\psi}}(\bm{t}|\tau),Q_{\bm{\omega}}(\bm{\gamma}|\tau), Q_{\nu}(r|\tau)$ are the conditional distributions for $\bm{t},\bm{\gamma},r$, respectively, that are assumed to be independent given the tree topology $\tau$.

Compared to other parameters, constructing appropriate conditional distributions for the node heights needs more careful consideration.
Unlike the branch lengths for unrooted trees, the node heights $\bm{t}$ for rooted time trees are subject to certain constraints induced by the sampling times and tree topologies.
More specifically, the heights of the parent nodes are required to be greater than the heights of their children and the heights of the tip nodes are given by the corresponding sampling times.
To cope with these constraints, we reparameterize the node heights as follows (similar reparameterizations have also been used before \citep{Yang07, Fourment19,Ji2023-wp}). 
First, we initialize the heights (and the lower bounds) of the tip nodes to the sampling times.
Then, we run a postorder traversal and compute the lower bounds for the node heights when we visit internal node $i$ via
\[
t_i^{\mathrm{lb}} = \max\{t_j^{\mathrm{lb}}: j\in \mathrm{ch}(i)\}.
\]
These lower bounds handle the constraints from the sampling times.
When the sampling times are all $0$, we can simply set these lower bounds to $0$.
Now, we traverse the tree in a preorder fashion and reparametrize the node heights using a non-negative parameter $T$ for the root height and parameters $\theta$ as follows.
We first visit the root node and
\begin{equation}\label{eq:node_height_reparameterization_root}
t_1 = t_1^{\mathrm{lb}} + T, \quad T>0.
\end{equation}
When we visit internal node $i$, we set the node height to (recalling that $\pi_i$ is the parent of $i$)
\begin{equation}\label{eq:node_height_reparameterization_internal}
t_i = t_{\pi_i} - \theta_i (t_{\pi_i}-t_i^{\mathrm{lb}}), \quad 0<\theta_i<1.
\end{equation}
Let $\mathcal{C}$ denote the set of indices for the internal nodes.
The remaining constraints for $T$ and $\bm{\theta}$ can be completely removed with the following transforms
\begin{equation}\label{eq:node_height_reparameterization_alpha}
T = \exp(\alpha_1),\quad \theta_i = \frac{1}{1+\exp(-\alpha_i)}, \;\forall i \in \mathcal{C}\backslash\{1\},
\end{equation}
where $\{\alpha_i\in \mathbb{R}:i\in\mathcal{C}\}$ are unconstrained parameters for node heights.
This completes our reparameterization of the conditional node height distributions.

Thanks to the hierarchical structure of the tree topology, the probability density function of the transformed node heights is computable given a tractable density for the base distribution of $\bm{\alpha}|\tau$ \citep{Fourment19,Ji2023-wp}.
First, the transform defined in \eqref{eq:node_height_reparameterization_root} and \eqref{eq:node_height_reparameterization_internal} has a triangular Jacobian matrix whose determinant can be readily computed as $\prod_{i\in\mathcal{C}\backslash\{1\}}(t_{\pi_i}-t_i^{\mathrm{lb}})$.
Second, transform \eqref{eq:node_height_reparameterization_alpha} has a diagonal Jacobian matrix whose determinant is $T\cdot\prod_{i\in\mathcal{C}\backslash\{1\}}\theta_i(1-\theta_i)$.
Due to the change of variable formula, we have
\begin{equation}
Q_{\bm{\psi}}(\bm{t}|\tau) = Q_{\bm{\psi}}(\bm{\alpha}|\tau) \left(\prod_{i\in\mathcal{C}\backslash\{1\}}(t_{\pi_i}-t_i^{\mathrm{lb}})\cdot T\cdot\!\!\!\!\prod_{i\in\mathcal{C}\backslash\{1\}}\theta_i(1-\theta_i)\right)^{-1}
\end{equation}
where $Q_{\bm{\psi}}(\bm{\alpha}|\tau)$ is the base distribution.

A simple choice for the base distribution is the diagonal Normal distribution
\[
Q_{\bm{\psi}}(\bm{\alpha}|\tau) = \mathcal{N}\left(\alpha_r|\mu(\tau), \sigma^2(\tau))\right)\cdot \prod_{e\in E(\tau)}\mathcal{N}\left(\alpha_e|\mu(e,\tau),\sigma^2(e,\tau)\right)
\]
where $\alpha_r$ is the parameter associated with the root height parameter $T$.
Similarly as before, we can use the similarity of local splitting patterns to parameterize our node height variational approximations across different rooted tree topologies.
More specifically, we can use the root subsplit for the root height parameter $\alpha_r$.
For the other node height parameters $\{\alpha_e\}_{e\in E(\tau)}$, we can follow the evolutionary direction of the rooted trees and use the clades (i.e., sets of descendant leaf nodes' labels) instead.
Given a rooted tree $\tau$, let $r/\tau$ denote the root subsplit and $e|\tau$ denote the clade for edge $e$.
Let $\mathbb{S}_\mathrm{r},\mathbb{S}_\mathrm{c}$ be the sets of root subsplits and clades in the support respectively.
We assign parameters for $\mu,\sigma$ for each element in $\mathbb{S}_\mathrm{r}\cup\mathbb{S}_\mathrm{c}$ and parameterize the mean and variance of $\bm{\alpha|}\tau$ as follows:
\begin{equation}\label{eq:simple_parameterization_rooted}
\mu(\tau)=\psi_{r/\tau}^\mu, \quad \sigma(\tau) = \psi_{r/\tau}^\sigma,\quad  \mu(e,\tau) = \psi_{e|\tau}^\mu, \quad \sigma(e,\tau)=\psi_{e|\tau}^\sigma.
\end{equation}
Again, more sophisticated parameterizations can be designed by including more tree-dependent local structures.
For example, for the root height distribution, we can introduce additional parameters for the associated primary subsplit pairs;
for the other node height distributions, we can introduce additional parameters for the leafward primary subsplit pairs and the subsplits for the clades.
All the associated variational parameters are then summed up to form the mean and variance of the corresponding node height distribution.
This parameterization can be viewed as a straightforward adaption of the primary subsplit parameterization presented in Section \ref{sec:psp} and we refer to it as the primary subsplit parameterization in the sequel.

Compared to the node height parameter $\bm{\alpha}$, the population size parameter $\bm{\gamma}$ and rate parameter $r$ are often much less related to the tree topology $\tau$ when a strict clock model is assumed.
For the sake of simplicity, we remove the dependency of $\bm{\gamma}$ and $r$ on $\tau$ from our variational approximation and assume $Q_{\bm{\omega}}(\bm{\gamma})$ and $Q_\nu(r)$ take diagonal Normal and Log-normal densities respectively.
The best variational approximation from our variational family \eqref{eq:mean_field_time_tree} then can be found by maximizing the multi-sample lower bound
\[
L^K(\bm{\phi},\bm{\psi},\bm{\omega},\nu) = \mathbb{E}_{Q_{\bm{\phi},\bm{\psi},\bm{\omega},\nu}(\tau^{1:K},\bm{t}^{1:K},\bm{\gamma}^{1:K},r^{1:K})}\log\left(\frac1K\sum_{i=1}^K\frac{p(\bm{Y}|\tau^i,\bm{q}^i)p(\bm{t}^i|\bm{\gamma}^i)p(\bm{\gamma}^i)p(r^i)}{Q_{\bm{\phi}}(\tau^i)Q_{\bm{\psi}}(\bm{t}^i|\tau^i)Q_{\bm{\omega}}(\bm{\gamma}^i)Q_\nu(r^i)}\right)
\]
as before, using stochastic gradient estimators introduced in Section \ref{sec:stochastic_gradient_estimators}.

\section{Experiments on Time Trees}\label{sec:experiments_rooted}
In this section, we analyzed two data sets to evaluate the accuracy of the posterior approximations provided by the proposed VBPI algorithms on time tree models.
As before, we focus on the most challenging part: joint learning of the tree topology, node heights and the population size parameters, and assume the simple \citet{Jukes69} substitution model.
The traditional MCMC analyses were done using BEAST \citep{Suchard18}, a commonly used MCMC implementation for phylogenetic analysis on time trees.
In the first example, we analyzed a set of heterochronous Dengue virus sequences \citep{Lanciotti97} under the strict clock model and a constant size coalescent prior on the tree.
This data set contains 17 whole-genome sequences (each with 1485 nucleotide sites) and can be downloaded from the BEAST Github repository (\url{https://github.com/beast-dev/beast-mcmc/tree/master/examples/Data}).
The corresponding phylogenetic posterior is informative enough to allow us to compare VBPI and the classic MCMC method on the tree topology posterior estimation.
The (rooted) subsplit support was gathered from 10 independent short MCMC runs of 1,000,000 iterations (sampled each 100 iterations with the first $25\%$ discarded).
In the second example, we analyzed a set of 63 RNA sequences of type 4 from the E1 region of the hepatitis C virus (HCV) genome that were sampled in 1993 in Egypt.
We used a fixed substitution rate of $7.9\times 10^{-4}$ per site per year, as suggested in previous studies \citep{Pybus01, Fourment19}.
We considered both the constant size coalescent prior and the Bayesian skyride prior on the tree.
For the Bayesian skyride tree prior, we used a Gaussian Markov random field prior for the effective population size trajectory and a relatively uninformative gamma prior $\Gamma(0.001,0.001)$ for the precision parameter \citep{Minin2008}.
This allows us to marginalize out the precision parameter and hence remove it from the set of parameters that require variational approximations (see Section \ref{sec:skyride}).
We used a uniform GMRF smoothing with equal temporal smoothing coefficients $\delta_2=\cdots = \delta_{62}=1$.
Similarly, the subsplit support was obtained from 10 independent short MCMC runs of 1,000,000 iterations (sampled each 2,000 iterations with the first $50\%$ discarded).
In both examples, the ground truth posteriors were estimated from extremely long MCMC runs of 10 billion iterations (sampled each 10,000 iterations with the first $25\%$ discarded as burn-in).
The KL divergences from trained SBN distributions to the ground truth are reported, similarly as previously done in Section \ref{sec:experiments_unrooted} (see equation \eqref{eq:kl_topology}).
Moreover, we also used similar parameterization strategies, i.e., the simple split-based parameterization and the primary subsplit pair (PSP) parameterization (see Section \ref{sec:vp_rooted}), a similar annealing schedule $\beta_t=\min(1, 0.001+t/50,000)$, and the same training objectives and stochastic gradient estimators as in Section \ref{sec:experiments_unrooted}.

\subsection{Tree Topology Posterior Approximation}\label{sec:tree_posterior_rooted}
\begin{figure}[t!]
\begin{center}
\includegraphics[width=\textwidth]{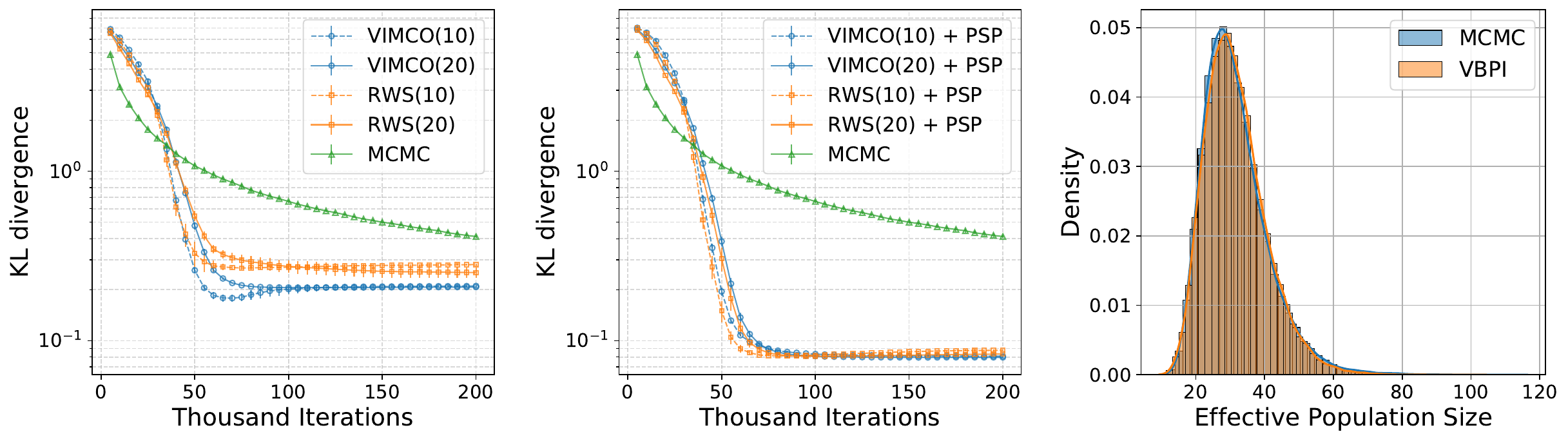}
\caption{Performance on the Dengue virus data set. ({\bf Left}): KL divergence for methods that use the simple split-based parameterization for the branch length distributions. ({\bf Middle}): KL divergence for methods that use PSP. ({\bf Right}): The effective population size posterior estimates.
The number in parentheses specifies the number of samples used in the training objective. The error bar shows one standard deviation over 10 independent runs.}\label{fig:kl_comparison_rooted}
\end{center}
\end{figure}

We first analyzed the heterochronous Dengue virus data to examine the performance of VBPI for tree topology posterior estimation on time tree models.
The left and middle plots in Figure~\ref{fig:kl_comparison_rooted} show the KL divergence to the ground truth as a function of the number of parameter updates.
The results for methods using the simple split-based parameterization are shown in the left plot, and the results for methods using PSP are shown in the middle plot.
Unlike the case of unrooted trees, we see that increasing the number of samples $K$ does not necessarily lead to significantly better tree topology posterior approximations even when the simple split-based parameterization was adopted.
However, a better modeling of between-tree variation of the branch length distributions is still beneficial for all methods with different stochastic gradient estimators and numbers of samples $K$.
More specifically, we see two outcomes when using a more flexible PSP parameterization: first, the gaps between different methods become considerably smaller, and second, the KL divergences of all methods were significantly improved.
This suggests a moderate $K$ and a flexible cross-topology branch length parameterization would be a favorable combination for VBPI on time trees.
As before, we see that RWS learns faster at the beginning while VIMCO performs better in the end.
To benchmark the learning efficiency of VBPI, we also compared to MCMC using BEAST 1.10.4 \citep{Suchard18}.
For that, we conducted 10 independent MCMC runs in BEAST, each for 8,000,000 iterations, and sample every 100 iterations.
For each run, we computed the KL divergence to the ground truth every 200,000 iterations with the first $25\%$ discarded as burn-in.
As before, we compare 40 times the number of MCMC iterations with the number of 20-sample objective VBPI iterations to account for the additional gradient computation and multiple samples required by the training objectives.
Similarly, we see that after a relatively slower starting phase, VBPI methods outperform MCMC and arrive at good approximations (especially those with PSP) with much fewer iterations.

\subsection{Lower Bound and Population Dynamics Estimation}
\begin{figure}[t!]
\begin{center}
\includegraphics[width=\textwidth]{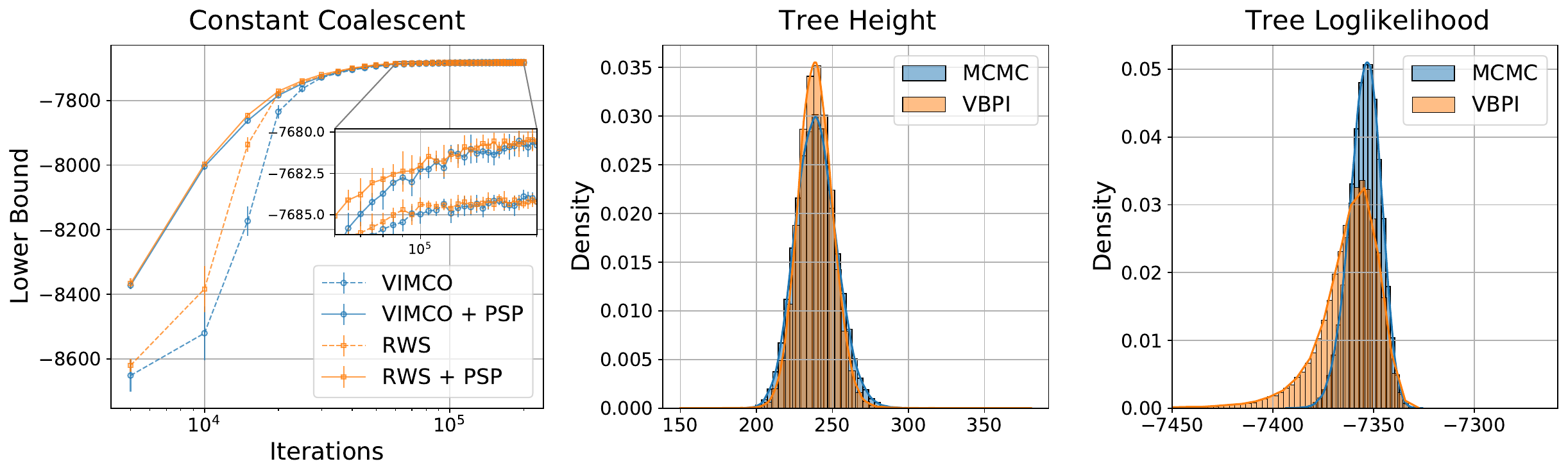}
\caption{Performance on the HCV data set with a constant coalescent prior. ({\bf Left}): Lower bounds. ({\bf Middle}): Tree height posteriors. ({\bf Right}): Tree Likelihoods from posterior samples.
All VBPI methods use 10 samples in the training objective. The error bar shows one standard deviation over 10 independent runs.}\label{fig:lower_bound_constant}
\end{center}
\end{figure}

\begin{figure}[t!]
\begin{center}
\includegraphics[width=\textwidth]{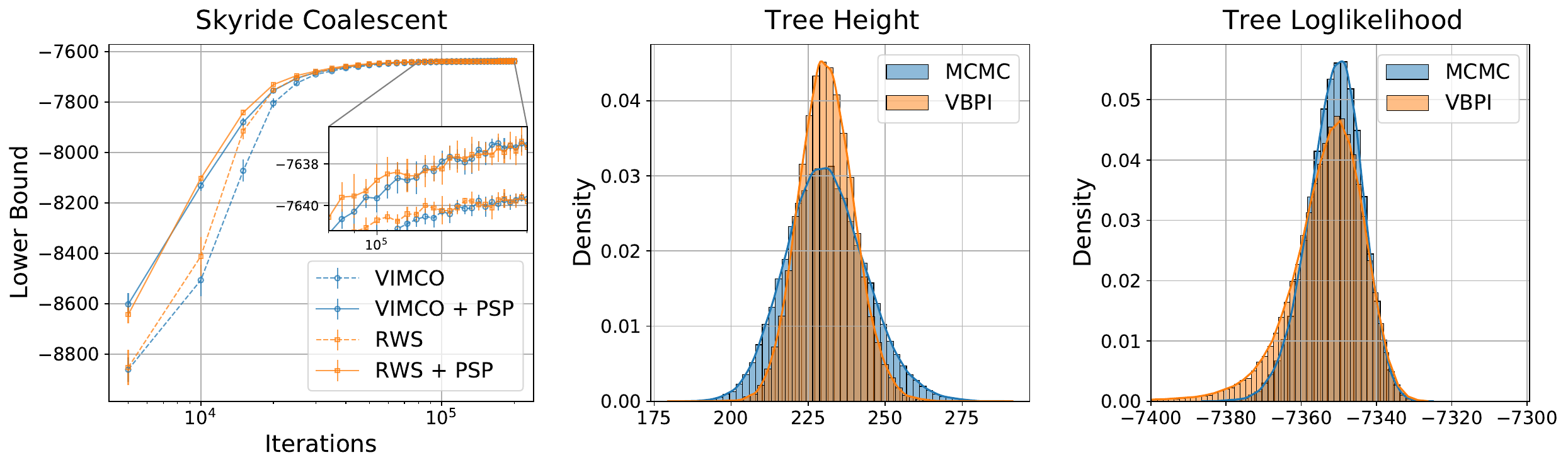}
\caption{Performance on the HCV data set with the skyride coalescent prior. ({\bf Left}): Lower bounds. ({\bf Middle}): Tree height posteriors. ({\bf Right}): Tree Likelihoods from posterior samples.
All VBPI methods use 10 samples in the training objective. The error bar shows one standard deviation over 10 independent runs.}\label{fig:lower_bound_skyride}
\end{center}
\vspace{-0.2in}
\end{figure}

In this section, we analyzed the convergence behavior of VBPI and its approximation accuracy on parameters other than the tree topology on a set of HCV sequences using both a constant coalescent prior and the skyride model with a fixed substitution rate.
The skyride model is flexible enough to capture the interesting underlying population dynamics for this data set \citep{Minin2008}.
As suggested in Section \ref{sec:tree_posterior_rooted}, we used $K=10$ samples in the training objective in all experiments.
The posterior approximations from VBPI were obtained from 75,000 samples\footnote{The same number of MCMC samples were used to form the ground truth posterior estimate.} after training using the VIMCO gradient estimator.
Figure~\ref{fig:lower_bound_constant} and \ref{fig:lower_bound_skyride} show the results for the constant coalescent prior and the skyride model respectively.
In both figures, the left plots show the evidence lower bounds (ELBO) as a function of the number of parameter updates.
We see that VIMCO and RWS gradient estimators perform similarly to each other with RWS being a little bit faster at the beginning.
For both VIMCO and RWS, using the PSP parameterization provides much tighter lower bounds and smoother training curves, which again, demonstrates the importance of the flexibility of branch length parameterization across different topologies.
The middle plots compare the posterior estimates of the tree height from MCMC and VBPI.
We see that VBPI is more conservative than MCMC on tree height posterior estimation.
This is partly due to the parameterization of the node heights in VBPI (see Section \ref{sec:vp_rooted}) where the tree height (i.e., root node height) has a strong impact on all the rest of the node heights on the tree.
The right plots show the tree likelihoods of samples from MCMC and VBPI.
We see that the likelihood values of trees sampled from VBPI are similar to those from MCMC, although more diffuse.

\begin{figure}[t!]
\begin{center}
\includegraphics[width=\textwidth]{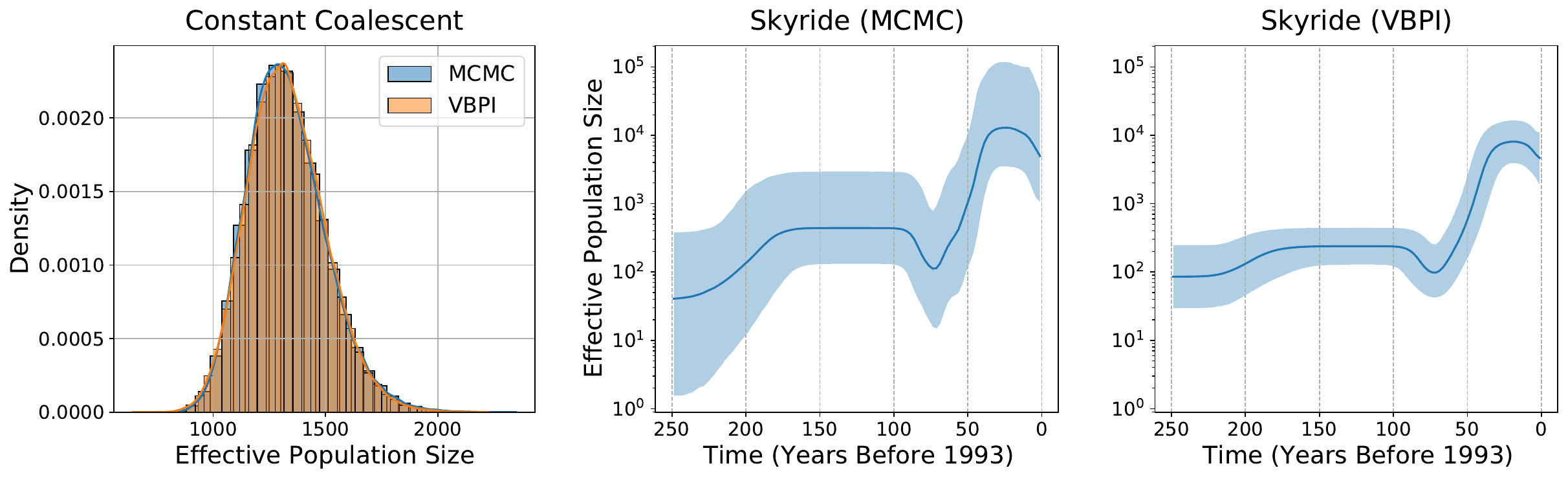}
\vspace{-0.1in}
\caption{Posterior approximation of the effective population size parameters on the HCV data set. ({\bf Left}): Constant coalescent. ({\bf Middle}): Skyride coalescent using MCMC. ({\bf Right}): Skyride coalescent using VBPI. The solid lines represent the posterior medians and the shaded areas represent the $95\%$ credible intervals.}\label{fig:pop_size_hcv}
\end{center}
\vspace{-0.2in}
\end{figure}

Finally, we evaluate the ability of VBPI to approximate the population dynamics of these HCV sequences (Figure~\ref{fig:pop_size_hcv}).
We see that for the simple constant coalescent model (the left plot), VBPI can provide high quality posterior approximation that matches the ground truth well.
The right plot in Figure~\ref{fig:kl_comparison_rooted} shows a similar result on the Dengue virus data set.
For the more challenging skyride model (the middle and the right plot), although the population trajectory estimated from VBPI still matches the ground truth, the $95\%$ credible intervals tend to be narrower than those estimated from MCMC.
This can, at least in part, be explained by the zero-forcing nature of the KL divergence used in VBPI and lack of flexibility of the mean field approximation for the population size parameters (see Section \ref{sec:vp_rooted}).
Overall, VBPI can provide reasonably good posterior approximation of the effective population size, especially when the coalescent model is simple.
We expect this problem of variance underestimation on the effective population size for complicated coalescent models to be alleviated with more flexible variational approximations and leave it for future work.

\section{Conclusion}\label{sec:conclusion}
In this work we propose VBPI as a general variational framework for Bayesian phylogenetic inference on the joint parameter space of phylogenetic models.
At the core of VBPI lie subsplit Bayesian networks, an expressive probabilistic graphical model that provides distributions over the tree topology space, and an efficient structured amortization of the branch lengths over different tree topologies.
We show that this provides a suitable variational family of distributions which can be trained by maximizing the multi-sample ELBO using stochastic gradient ascent.
With guided exploration in the tree topology space (enabled by SBNs) and joint learning of the branch length distribution across tree topologies (via amortization), VBPI can provide competitive performance to MCMC methods especially when the posterior is diffuse.
We used different branch length parameterizations and different numbers of samples in the multi-sample EBLO for training, and found that the flexibility of branch length parameterization across tree topologies has a stronger impact on the approximation accuracy and the complexity in optimization.
Moreover, variational approximations provided by VBPI can be readily used for downstream tasks such as marginal likelihood estimation for model comparison via importance sampling.
We report promising numerical results demonstrating the effectiveness and efficiency of VBPI on a benchmark of real data Bayesian phylogenetic inference problems.

The current VBPI framework also has many limitations.
The main limitation is that it requires subsplit support estimation for variational parameterization.
Subsplit support estimation is currently done with a pre-specified number of candidate tree topologies.
As the exact posterior is often unknown, we may need a stopping criterion for sufficient support exploration in practice.
The KL divergence, or equivalently, the ELBO, that are commonly used for variational inference is not suitable in this case due to its mode seeking behavior.
As an alternative, we may conduct multiple short MCMC/bootstrap runs, and check if the newly collected subsplit supports contain significantly novel split patterns that haven't been seen before.
In other lines of research, we are currently working on systematic means of support estimation, generalizing~\cite{Whidden19} to the subsplit support context with several novel objective functions.

In addition, when the data are relatively insufficient and posteriors are diffuse, subsplit support estimation becomes challenging.
However, compared to classical MCMC approaches in phylogenetics that need to traverse the enormous support of posteriors on complete trees to accurately evaluate the posterior probabilities, the subsplit-based parameterization in VBPI has a natural advantage in that it alleviates this issue by factorizing the uncertainty of complete tree topologies into local structures.
This way, the SBN approximation given by VBPI can be viewed as a compact representation of the topological uncertainty of phylogenetic trees that can be learned in a timely manner.
On the other hand, when there is minimal signal for structure of a region of the tree, a variational model will have a large number of parameters in that region.

Another limitation is that joint inference on model parameters could be challenging, especially when there exists an unidentifiability (e.g., rate matrix and branch length).
Although the resulting posterior would still be proper when an appropriate prior is used for the related parameters, the correlation among these related parameters would be more complex.
When this happens (e.g., relaxed clock models for time trees), our current variational approximation construction may be insufficient and more sophisticated design may be needed.
For example, we may expand all individual variational building blocks with as many dimensions as global parameters.
To reduce the number of parameters, we can attribute the global parameters to different variational building blocks when there exists conditional independence structures in the model.
We can also try tying together parameters over different variational building blocks using neural networks.
Another option is to provide more flexible variational families for the joint distribution of related parameters (e.g., normalizing flows which we discuss later for future work), instead of a product of conditionals.

Another potential limitation concerns optimization.
Since the ELBO objective is often multimodal, the training of VBPI may end up with some local mode and hence lead to suboptimal variational approximations.

There are many extensions for future work.
The first extension is to construct more flexible variational family of distributions.
The current branch length variational distributions in VBPI are relatively simple and can be improved by using more flexible architectures (e.g., normalizing flows \citep{NF}) to reduce the approximation error and amortization error.
Recent work \citep{VBPI-NF, VBPI-GNN} shows promising results in this direction where properly incorporating the inductive biases of phylogenetic models (e.g., the edges are orderless across tree topologies) seems to be the key.
Similarly, we can design more flexible variational approximations for the effective population size parameters to alleviate the variance underestimation problem for time tree models.

Another potential extension is to find more efficient ways to infer the support of subsplits.
Besides bootstrapping and short MCMC runs, good candidate trees can also be obtained from fast heuristic search methods \citep{Whidden19}.
Instead of fixing the support, we can also grow a dynamic support of candidate subsplits and keep adding ``good'' subsplits and removing ``bad'' subsplits in a manner analogous to maximum-likelihood phylogenetic inference.
Moreover, we can also design appropriate stopping criteria as discussed above for these methods.
Besides SBNs, one can also design distributions that have support over the entire tree topology space, eliminating the necessity for support estimation \citep{xie2023artree, mimori2023geophy, zhou2024phylogfn}. 

When the variational approximations are not close enough to the exact posterior, pure importance sampling based marginal likelihood estimation using VBPI may become prohibitive.
In this case, we can instead use these variational approximations as reference distributions for GSS approaches \citep{GSS, Holder14, Baele16}.

For the sake of simplicity, we only considered \citet{Jukes69} substitution model and the strict clock model in this paper.
Future methods will also consider including other interesting components of phylogenetic models, such as more complicated substitution models and the relaxed clock model, into our variational formulation.


\acks{This work supported by NIH grant AI162611.
The research of Cheng Zhang was supported in part by National Natural Science Foundation of China (grant no. 12201014 and grant no. 12292983), the National Engineering Laboratory for Big Data Analysis and Applications, the Key Laboratory of Mathematics and Its Applications (LMAM) and the Key Laboratory of Mathematical Economics and Quantitative Finance (LMEQF) of Peking University.
Frederick Matsen is an investigator of the Howard Hughes Medical Institute.
The authors would like to thank Mathieu Fourment, Michael Karcher, Marc Suchard, and Christiaan Swanepoel for discussions.
The authors are also grateful for the invaluable feedback from our anonymous reviewers and the Action Editor.}


\newpage

\appendix

\section*{Appendix A. The PSP Parameterization}
\begin{figure}[h!]
\begin{center}
\input{figs/subsplit_pair}
\caption{
Branch length parameterization using primary subsplit pairs, which is the sum of parameters for a split and its neighboring subsplit pairs.
Edge $e$ represents a split $(W,Z)$. Parameterization for the variance is the same as for the mean.}\label{fig:subsplitpair}
\end{center}
\end{figure}

\section*{Appendix B. Proofs of core properties of SBNs}



In this section, we prove some important properties of subsplit Bayesian networks given in Section \ref{sec:sbn}.

\vspace{1em}
\noindent
{\bf Lemma \ref{lemma:sbnrep}}
{There is a one-to-one mapping between rooted tree topologies on $\mathcal{X}$ and compatible node assignments.}
\vspace{1em}
\begin{proof}
For any rooted tree $\tau$, with natural indexing, a compatible node assignment is easy to obtain by following the splitting process and obeying the compatibility requirement introduced in Section \ref{sec:sbn}. Moreover, the depth of the splitting process is at most $N-1$, where $N$ is the size of $\mathcal{X}$. To show this, consider the maximum size of clades, $d_i$, in the $i$-th level of the process. The first level is for the root split, and $d_1 \le N-1$ since the subclades need to be proper subset of $\mathcal{X}$. For each $i>1$, if $d_i>1$, the splitting process continues and the maximum size of clades in the next level is at most $d_i-1$, that is $d_{i+1}\leq d_i-1$. Note that the split process ends when $d_i=1$, its depth therefore is at most $N-1$. The whole process is deterministic, so each rooted tree can be uniquely represented as a compatible assignment of $\mathcal{B}_\mathcal{X}^\ast$. On the other hand, given a compatible node assignment, we can simply remove all singleton clades. The remaining net corresponds to the true splitting process of a rooted tree, which then can be simply restored.
\end{proof}

\noindent
{\bf Proposition \ref{prop:consistency}}
{For any consistently parameterized SBN, $\sum_\tau p_{\mathrm{sbn}}(T=\tau) = 1$.}
\vspace{1em}
\begin{proof}
Denote the set of all SBN assignments as $\mathcal{A}$. For any assignment $a = \{S_i=s_i^a\}_{i\ge 1}$, we have
\begin{equation}
p_{\mathrm{sbn}}(a) = p(S_1=s_1^a) \prod_{i>1}p(S_i=s_i^a|S_{\pi_i} = s_{\pi}^a) > 0 \Rightarrow \text{ $a$ is compatible}.
\end{equation}
As a Bayesian network,
\[
\sum_{a\in \mathcal{A}}p_{\mathrm{sbn}}(a) = \sum_{s_1}p(S_1=s_1)\prod_{i>1}\sum_{s_i}p(S_i=s_i|S_{\pi_i}=s_{\pi_i}) = 1.
\]
By Lemma \ref{lemma:sbnrep},
\[
\sum_{T}p_{\mathrm{sbn}}(T) = \sum_{a \sim {\mathrm{compatible}}} p_{\mathrm{sbn}}(a) = \sum_{a\in \mathcal{A}}p_{\mathrm{sbn}}(a) = 1.
\]
\end{proof}
Here $a \sim {\mathrm{compatible}}$ denotes a sum over compatible node assignments as per Definition~\ref{def:compatible}.

\section*{Appendix C. Gradients for The Multi-sample Objectives}

In this section we will derive the gradients for the multi-sample objectives introduced in Section \ref{sec:vbpi}.
The derivation presented here is a straightforward adaption of the stochastic gradient estimators introduced before \citep{VIMCO, VAE, RWS}, which we write out make explicit the slightly unusual setting of phylogenetics, which includes both a discrete tree topology and associated continuous parameters.
We start with the lower bound

\begin{align*}
L^K({\bm{\phi}}, {\bm{\psi}}) &= \mathbb{E}_{Q_{{\bm{\phi}},{\bm{\psi}}}(\tau^{1:K},\; {\bm{q}}^{1:K})}\log\left(\frac1K\sum_{j=1}^K\frac{p({\bm{Y}}|\tau^j, {\bm{q}}^j) p(\tau^j, {\bm{q}}^j)}{Q_{{\bm{\phi}}}(\tau^j)Q_{\bm{\psi}}({\bm{q}}^j|\tau^j)}\right)\\
&= \mathbb{E}_{Q_{{\bm{\phi}},{\bm{\psi}}}(\tau^{1:K},\; {\bm{q}}^{1:K})}\log\left(\frac1K\sum_{j=1}^K f_{{\bm{\phi}}, {\bm{\psi}}}(\tau^j, {\bm{q}}^j)\right).
\end{align*}
Using the product rule and noting that $\nabla_{\bm{\phi}} \log f_{{\bm{\phi}},{\bm{\psi}}}(\tau^j,{\bm{q}}^j) = - \nabla_{\bm{\phi}} \log Q_{\bm{\phi}}(\tau^j)$,
\begin{align*}
\nabla_{{\bm{\phi}}}L^k({\bm{\phi}},{\bm{\psi}}) &= \mathbb{E}_{Q_{{\bm{\phi}},{\bm{\psi}}}(\tau^{1:K},\; {\bm{q}}^{1:K})} \nabla_{\bm{\phi}} \log\left(\frac1K\sum_{j=1}^K f_{{\bm{\phi}}, {\bm{\psi}}}(\tau^j, {\bm{q}}^j)\right) + \\
&\hphantom{===} \mathbb{E}_{Q_{{\bm{\phi}},{\bm{\psi}}}(\tau^{1:K},\; {\bm{q}}^{1:K})} \sum_{j=1}^K \frac{\nabla_{\bm{\phi}} Q_{\bm{\phi}}(\tau^j)}{Q_{\bm{\phi}}(\tau^j)} \log\left(\frac1K\sum_{i=1}^K f_{{\bm{\phi}}, {\bm{\psi}}}(\tau^i, {\bm{q}}^i)\right) \\
& = \mathbb{E}_{Q_{{\bm{\phi}},{\bm{\psi}}}(\tau^{1:K},\; {\bm{q}}^{1:K})} \sum_{j=1}^K \frac{f_{{\bm{\phi}},{\bm{\psi}}}(\tau^j,{\bm{q}}^j)}{\sum_{i=1}^Kf_{{\bm{\phi}},{\bm{\psi}}}(\tau^i,{\bm{q}}^i)} \nabla_{\bm{\phi}} \log f_{{\bm{\phi}},{\bm{\psi}}}(\tau^j,{\bm{q}}^j) + \\
&\hphantom{===}  \mathbb{E}_{Q_{{\bm{\phi}},{\bm{\psi}}}(\tau^{1:K},\; {\bm{q}}^{1:K})} \sum_{j=1}^K  \log\left(\frac1K\sum_{i=1}^K f_{{\bm{\phi}}, {\bm{\psi}}}(\tau^i, {\bm{q}}^i)\right) \nabla_{\bm{\phi}} \log Q_{{\bm{\phi}}}(\tau^j) \\
& = \mathbb{E}_{Q_{{\bm{\phi}},{\bm{\psi}}}(\tau^{1:K},\; {\bm{q}}^{1:K})} \sum_{j=1}^K \left(\hat{L}^K({\bm{\phi}},{\bm{\psi}}) - \tilde{w}^j\right) \nabla_{\bm{\phi}} \log Q_{\bm{\phi}}(\tau^j).
\end{align*}
This gives the naive gradient of the lower bound w.r.t. ${\bm{\phi}}$.

Using the reparameterization trick, the lower bound has the form
\begin{align*}
L^K({\bm{\phi}},{\bm{\psi}}) &= \mathbb{E}_{Q_{{\bm{\phi}},{\bm{\epsilon}}}(\tau^{1:K},{\bm{\epsilon}}^{1:K})}\log\left(\frac1K\sum_{j=1}^K\frac{p({\bm{Y}}|\tau^j,g_{\bm{\psi}}({\bm{\epsilon}}^j|\tau^j))p(\tau^j, g_{\bm{\psi}}({\bm{\epsilon}}^j|\tau^j))}{Q_{\bm{\phi}}(\tau^j)Q_{\bm{\psi}}(g_{\bm{\psi}}({\bm{\epsilon}}^j|\tau^j)|\tau^j)}\right) \\
&= \mathbb{E}_{Q_{{\bm{\phi}},{\bm{\epsilon}}}(\tau^{1:K},{\bm{\epsilon}}^{1:K})}\log\left(\frac1K\sum_{j=1}^K f_{{\bm{\phi}},{\bm{\psi}}}(\tau^j, g_{\bm{\psi}}({\bm{\epsilon}}^j|\tau^j)) \right)
\end{align*}
Since ${\bm{\psi}}$ is not involved in the distribution with respect to which we take expectation,
\begin{align*}
\nabla_{\bm{\psi}} L^K({\bm{\phi}},{\bm{\psi}}) &= \mathbb{E}_{Q_{{\bm{\phi}},{\bm{\epsilon}}}(\tau^{1:K},{\bm{\epsilon}}^{1:K})} \nabla_{\bm{\psi}} \log\left(\frac1K\sum_{j=1}^K f_{{\bm{\phi}},{\bm{\psi}}}(\tau^j, g_{\bm{\psi}}({\bm{\epsilon}}^j|\tau^j)) \right) \\
&= \mathbb{E}_{Q_{{\bm{\phi}},{\bm{\epsilon}}}(\tau^{1:K},{\bm{\epsilon}}^{1:K})} \sum_{j=1}^K \frac{f_{{\bm{\phi}},{\bm{\psi}}}(\tau^j, g_{\bm{\psi}}({\bm{\epsilon}}^j|\tau^j))}{\sum_{i=1}^Kf_{{\bm{\phi}},{\bm{\psi}}}(\tau^i, g_{\bm{\psi}}({\bm{\epsilon}}^i|\tau^i))}\nabla_{\bm{\psi}} \log f_{{\bm{\phi}},{\bm{\psi}}}(\tau^j,g_{\bm{\psi}}({\bm{\epsilon}}^j|\tau^j)) \\
&= \mathbb{E}_{Q_{{\bm{\phi}},{\bm{\epsilon}}}(\tau^{1:K},{\bm{\epsilon}}^{1:K})} \sum_{j=1}^K \tilde{w}^j\nabla_{\bm{\psi}} \log f_{{\bm{\phi}},{\bm{\psi}}}(\tau^j,g_{\bm{\psi}}({\bm{\epsilon}}^j|\tau^j)).
\end{align*}

Next, we derive the gradient of the multi-sample likelihood objective used in RWS
\[
\tilde{L}({\bm{\phi}},{\bm{\psi}}) = \mathbb{E}_{p(\tau,{\bm{q}}|{\bm{Y}})}\log Q_{{\bm{\phi}},{\bm{\psi}}}(\tau, {\bm{q}}).
\]
Again, $p(\tau, {\bm{q}}|{\bm{Y}})$ is independent of ${\bm{\phi}},{\bm{\psi}}$, and we have
\begin{align*}
\nabla_{\bm{\phi}} \tilde{L}({\bm{\phi}},{\bm{\psi}}) &= \mathbb{E}_{p(\tau,{\bm{q}}|{\bm{Y}})} \nabla_{\bm{\phi}} \log Q_{{\bm{\phi}},{\bm{\psi}}}(\tau, {\bm{q}}) \\
&= \mathbb{E}_{Q_{{\bm{\phi}},{\bm{\psi}}}(\tau, {\bm{q}})} \frac{p(\tau, {\bm{q}}|{\bm{Y}})}{Q_{\bm{\phi}}(\tau)Q_{\bm{\psi}}({\bm{q}}|\tau)} \nabla_{\bm{\phi}} \log Q_{{\bm{\phi}},{\bm{\psi}}}(\tau, {\bm{q}}) \\
&= \frac{1}{p({\bm{Y}})} \mathbb{E}_{Q_{{\bm{\phi}},{\bm{\psi}}}(\tau,{\bm{q}})} f_{{\bm{\phi}},{\bm{\psi}}}(\tau, {\bm{q}}) \nabla_{\bm{\phi}} \log Q_{\bm{\phi}}(\tau)\\
& \simeq \sum_{j=1}^K \frac{f_{{\bm{\phi}},{\bm{\psi}}}(\tau^j,{\bm{q}}^j)}{\sum_{i=1}^kf_{{\bm{\phi}},{\bm{\psi}}}(\tau^i,{\bm{q}}^i)} \nabla_{\bm{\phi}} \log Q_{\bm{\phi}}(\tau^j)  \;\text{ with }\; \tau^j, {\bm{q}}^j \mathrel{\overset{\makebox[0pt]{\mbox{\normalfont\tiny\sffamily iid}}}{\sim}} Q_{{\bm{\phi}},{\bm{\psi}}}(\tau,\;{\bm{q}}). \\
&= \sum_{j=1}^K \tilde{w}^j \nabla_{\bm{\phi}} \log Q_{\bm{\phi}}(\tau^j)
\end{align*}
The second to last step uses self-normalized importance sampling with $K$ samples. $\nabla_{\bm{\psi}}\tilde{L}({\bm{\phi}},{\bm{\psi}})$ can be computed in a similar way.

\begin{figure}[t!]
\begin{center}
\includegraphics[width=\textwidth]{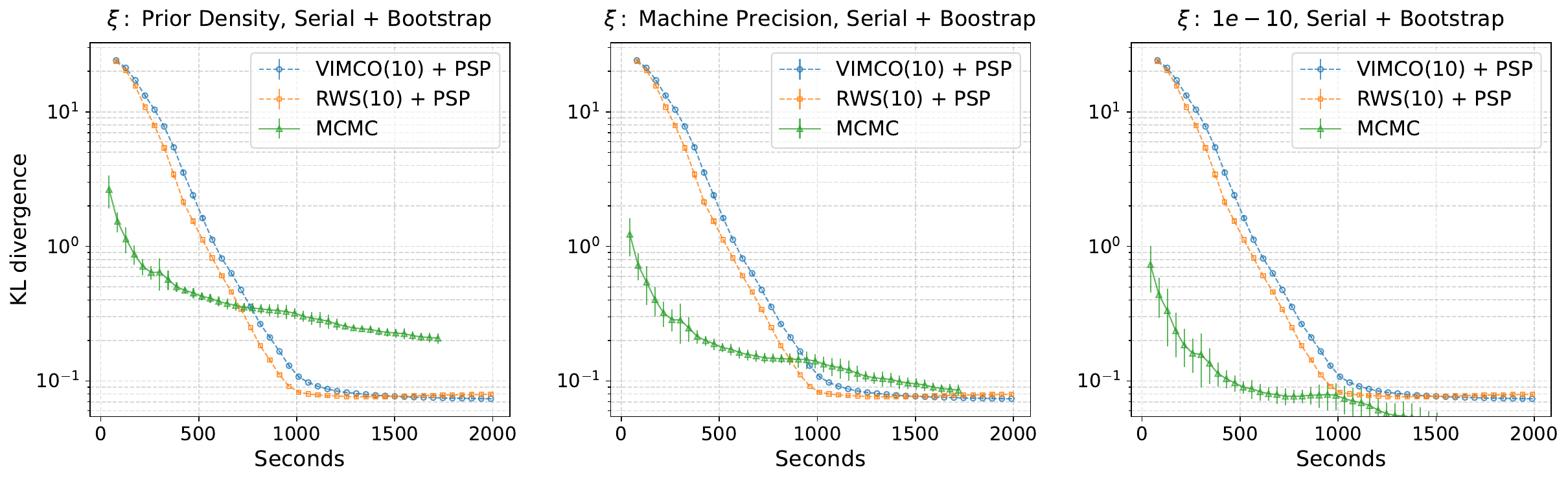}\\
\includegraphics[width=\textwidth]{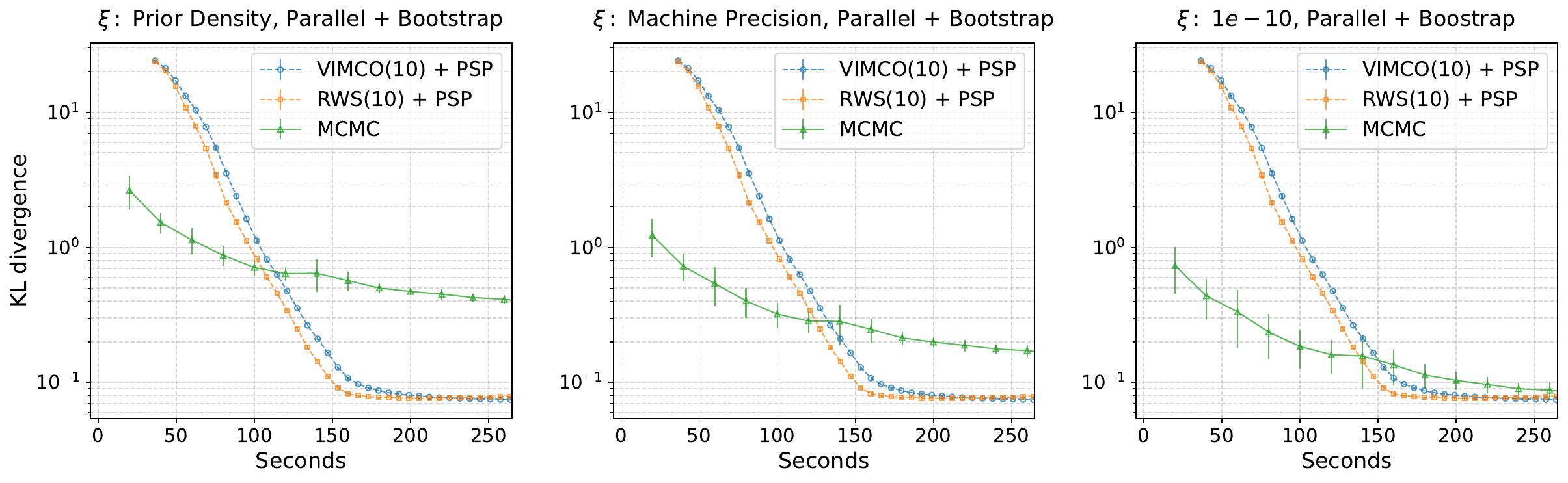}\\
\includegraphics[width=\textwidth]{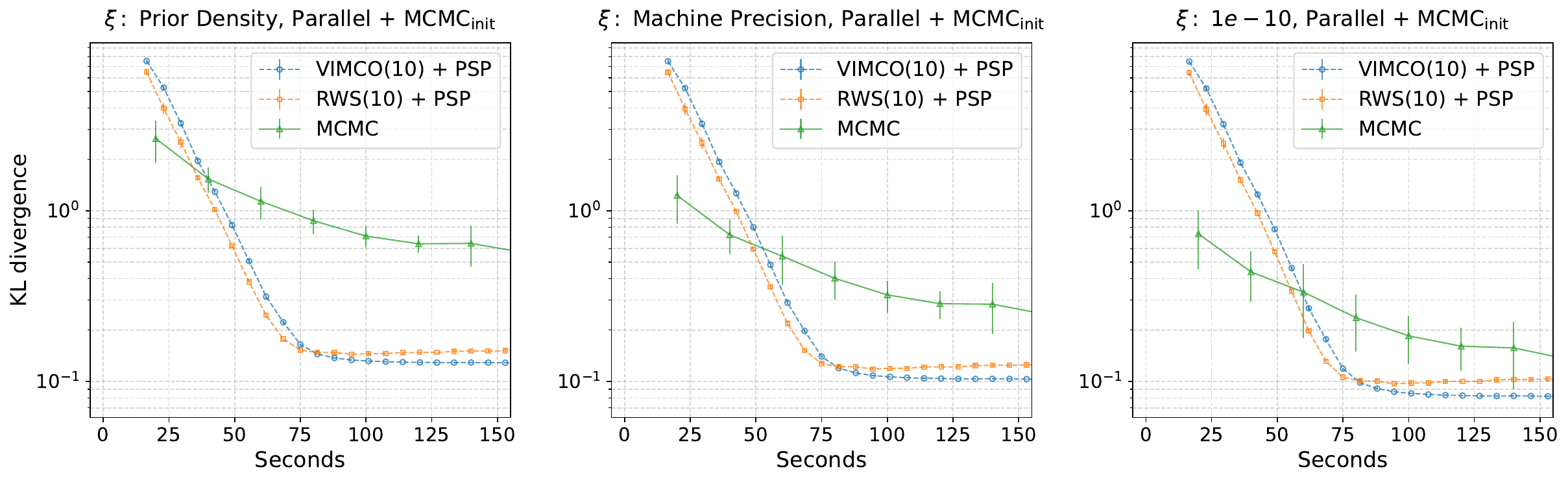}
\caption{A runtime comparison between VBPI and MCMC on DS1. ({\bf Top}): Serial implementation with bootstrap subsplit support. ({\bf Middle}): Parallel implementation with bootstrap subsplit support. ({\bf Bottom}): Parallel implementation with short run MCMC subsplit support. Each column corresponds to a different value of $\xi$. Error bars show one standard deviation over 10 independent runs.}\label{fig:ds1_run_time_comparison}
\end{center}
\vspace{-0.2in}
\end{figure}

\section*{Appendix D. Runtime Comparison between VBPI and MCMC}
We provide approximate wall time comparison results in this section.
That is, we work to understand what the wall time of phylogenetic inference using an optimized implementation of VBPI would be, including the cost of subsplit support estimation.
Our current implementation is in Python, which uses automatic differentiation for likelihood gradient computation; this is inefficient compared to custom gradient code~\citep{Fourment2022-go}.
For example, on the DS1 example below, our python implementation takes about 14200 seconds for 200,000 iterations while the more customized \texttt{bito} library takes only around 1960 seconds for all involved phylogenetic likelihood and gradient computation.
It is not straightforward to compare the number of phylogenetic likelihood computations between VBPI and MCMC as most phylogenetic MCMC moves can leverage the caching trick for more efficient likelihood computation, without leaving a record of how many likelihood operations were performed.

Therefore we use the following strategy to estimate wall clock time, based on the assumption that the phylogenetic gradient will dominate run time.
For VBPI, we record the number of gradient operations needed for a given number of iterations.
For each such iteration, we lower bound the runtime by the likelihood and gradient computation for that sequence alignment using \texttt{bito} (\url{https://github.com/phylovi/bito}), which is a Python-interface C++ library that uses BEAGLE \citep{BEAGLE3} as the backend likelihood and gradient engine.
We used $K=10$ samples in the training objective with PSP branch length parameterization and both the VIMCO and RWS gradient estimators for training.
The runtimes for VBPI also include the cost of computing a sample of tree topologies and gathering the set of subsplit supports from those trees.

For MCMC, we use MrBayes with BEAGLE backend as well.
For each run, we ran MrBayes with 4 chains for 8,000,000 iterations, sampling every 100 iterations, and computed the KL divergence to the ground truth every 200,000 iterations with the first $25\%$ discarded as burn-in.
We examine the performance of VBPI and MCMC on two data sets: DS1 and the influenza data set with 100 sequences introduced in Section \ref{sec:curated_data_benchmark}.

We also investigate the impact that $\xi$ has in the KL computation.
This parameter is the assigned probability for a tree in the ground truth/long MCMC run but not in the approximate distribution (short MCMC or subsplit support).
We test two alternative choices for $\xi$ in the KL computation: the prior density and $1e\text{-}10$.
The prior density corresponds to treating all unsampled tree topologies equally and penalizes MCMC more for not detecting trees in the ground truth samples, especially when the number of sequences is large, while $1e\text{-}10$ is closer to the smallest ground truth posterior estimates for both data sets and hence would bias MCMC towards the ground truth.
Besides serial implementations, we also compared parallel implementations of both method where VBPI can be more advantageous due to the use of multiple-sample training objectives.
We used 4 processors for MPI MrBayes runs as there are 4 chains.
For VBPI, we lower bound the parallel runtimes using \texttt{bito} with 10 threads as $K=10$.
The results were obtained from 10 independent runs for both methods.
All experiments were done on a 2019 Mac Pro (3.2 GHz 16-Core Intel Xeon W).

Figure \ref{fig:ds1_run_time_comparison} shows the KL divergence to the ground truth on DS1 as a function of approximate runtime.
For each row, the left, middle and right plots show the results when $\xi$ is set to the prior density, the machine precision and $1e\text{-}10$ respectively.
The results for the serial implementations are shown in the first row.
We see that the speed advantage of VBPI is reduced due to the caching trick of MCMC moves.
As $\xi$ increases, the reported KL divergences from standard MCMC runs get better and VBPI is slower than MCMC when $\xi=1e\text{-}10$.
When parallelization is enabled, as shown in the second row, the advantage of VBPI is recovered.
Compared to MCMC, the performance of VBPI is not much affected by the choice of $\xi$ which is due to the large subsplit support that provides a better coverage of the ground truth samples (however, this also makes the optimization harder).
As the exact posterior here is relatively concentrated (around 3,000 unique trees in the ground truth sample), it turns out that we can reduce the size of subsplit support for faster convergence of VBPI, with only a slight decrease in the approximation accuracy.
The last row shows the results when the subsplit support were obtained using 10 replicates of short MCMC runs with 100,000 iterations (sampled every 100 iterations with a $25\%$ burn-in rate).
In this case, we can use a faster annealing schedule where $\beta_t = \min(1, 0.001+ t/50,000)$ and the convergence time is halved compared to the previous bootstrap method, albeit the KL divergence would be a bit larger when $\xi$ is small.

\begin{figure}[t!]
\begin{center}
\includegraphics[width=\textwidth]{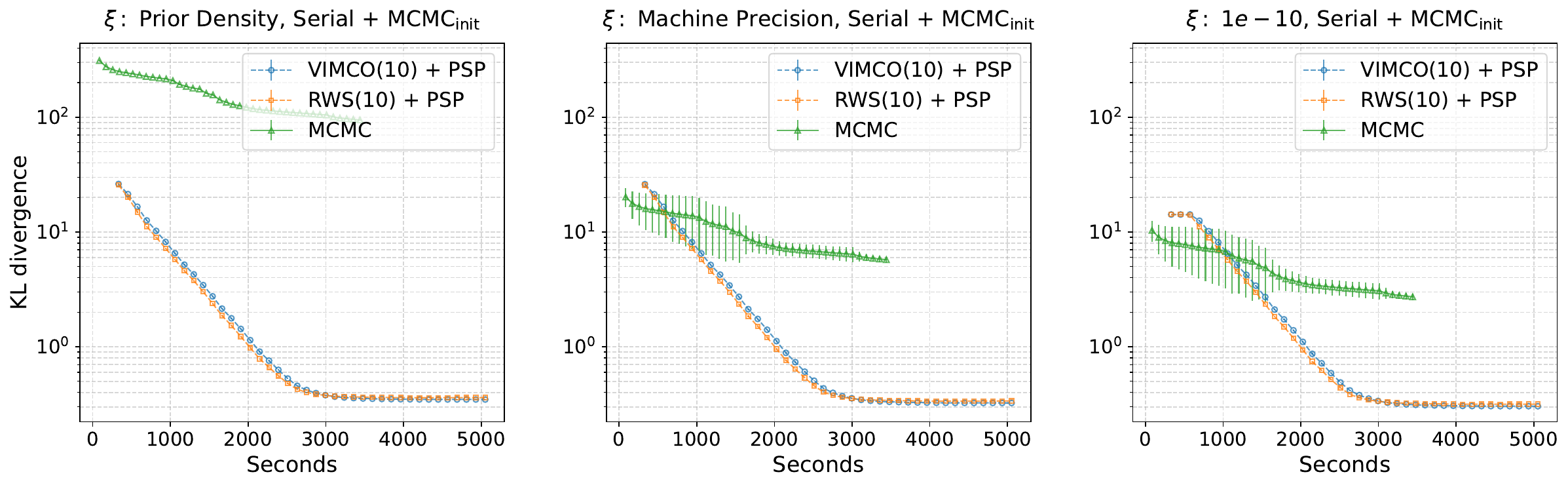}\\
\includegraphics[width=\textwidth]{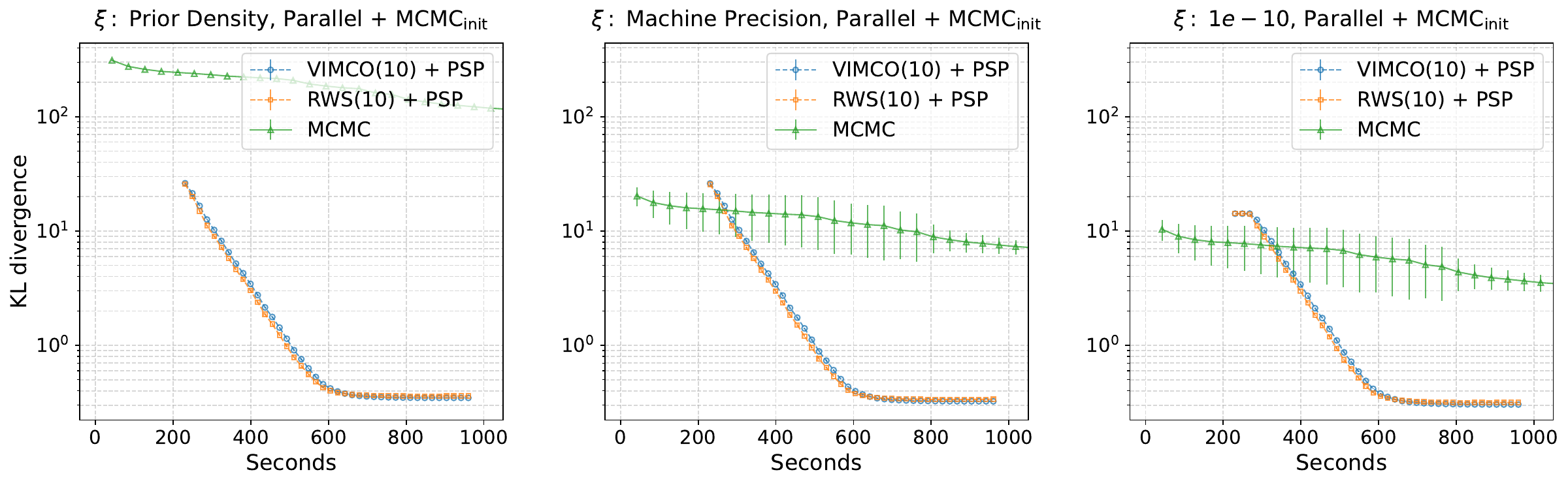}
\caption{A runtime comparison between VBPI and MCMC on a curated influenza data set with 100 sequences. ({\bf Top}): Serial implementation with short run MCMC subsplit support. ({\bf Bottom}): Parallel implementation with short run MCMC subsplit support. Each column corresponds to a different value of $\xi$. Error bars show one standard deviation over 10 independent runs.}\label{fig:flu100_run_time_comparison}
\end{center}
\end{figure}

The advantage of VBPI should be more evident when the posterior is more diffuse.
Figure \ref{fig:flu100_run_time_comparison} shows the runtime comparison results on a curated influenza data set with 100 sequences.
Although subsplit support estimation would require some extra computation, we see that overall VBPI converges much faster than MCMC even using serial implementations.
The advantage of VBPI can be further improved when parallelization is used.
As the tree space is large and the posterior is quite diffuse (around 100,000 unique trees in the ground truth sample), when the prior density is used for $\xi$, the KL divergences of standard MCMC would be high since lots of trees in the ground truth sample would not be sampled yet (this is also a limitation of MCMC which usually requires a large number of samples for accurate posterior estimation, especially when the posterior is diffuse).
When $\xi$ increases, the KL divergences of standard MCMC runs get better but still fall behind VBPI by a large margin.


\section*{Appendix E. Heterochronous Coalescent Models}
In practice, we may have different sampling times when the observed organisms are evolving rapidly.
In this case, the standard coalescent model can be generalized to account for such \emph{heterochronous} sampling \citep{Rodrigo1999,Felsenstein1999-cm}.
Consider the ordered node times $t_{2N-1}=0<t_{2N-2}<\cdots< t_1$, which include both the sampling times for the tip nodes and the coalescent times for the internal nodes.
Let $\ell_k$ denote the number of lineages co-existing in the time interval $(t_{k}, t_{k-1})$ between node $k$ and node $k-1$.
Let $\mathcal{C}$ denote the set of indices for the internal nodes.
The density for heterochronous coalescent models can be obtained by modifying density \eqref{eq:isochronous_coalescent} as
\begin{equation}\label{eq:heterochronous_coalescent}
p(\bm{t}|N_e(t)) = \prod_{k\in \mathcal{C}}\frac{A_{k+1}}{N_e(t_{k})}\cdot\prod_{k=2}^{2N-1}\exp\left(-\int_{t_k}^{t_{k-1}}\frac{A_k}{N_e(t)}dt\right)
\end{equation}
where $\bm{t}=(t_1,\ldots,t_{2N-1}), A_k={{\ell_k}\choose{2}}$.
Note that density \eqref{eq:isochronous_coalescent} can be viewed as a special case of density \eqref{eq:heterochronous_coalescent} when $t_{k}=0,\forall k\geq N$.

\newpage

\section*{Appendix F. Consensus Tree Comparison}
\begin{figure}[h!]
\begin{center}
\includegraphics[width=0.87\textwidth]{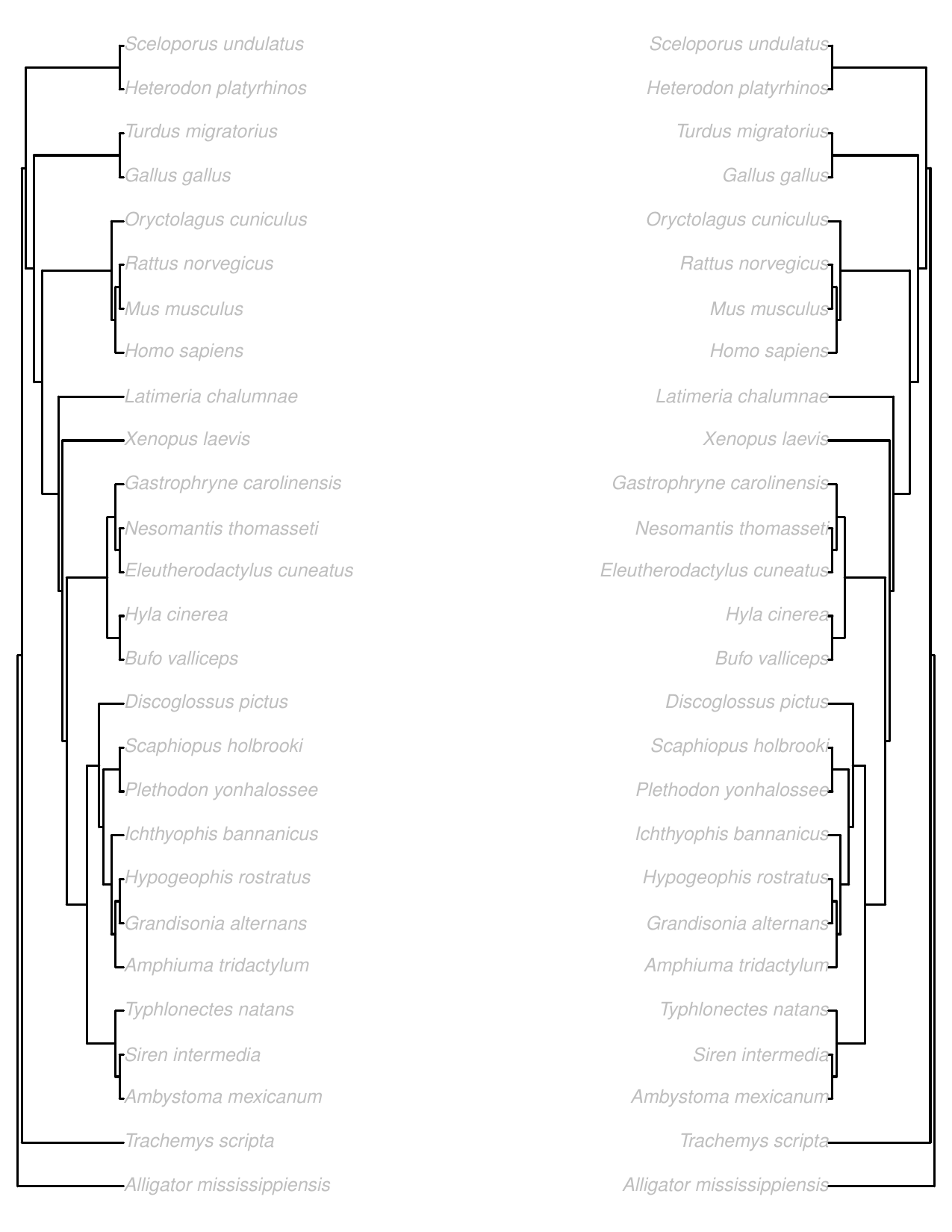}
\caption{A comparison of majority-rule consensus trees obtained from VBPI and ground truth MCMC run on DS1. ({\bf Left}): Ground truth MCMC. ({\bf Right}): VBPI ($10,000$ sampled trees). The plot is created using the \emph{treespace} \citep{treespace} R package.}\label{fig:consensus}
\end{center}
\end{figure}

\section*{Appendix G. Importance Sampling via Variational Approximations}
In this section, we provide a detailed importance sampling procedure for marginal likelihood estimation for phylogenetic inference based on the variational approximations provided by VBPI.

\vspace{1em}
\noindent
{\bf Estimating Marginal Likelihood of Trees}
For each tree $\tau$ that is covered by the subsplit support,
\[
Q_{\bm{\psi}}(\bm{q}|\tau) = \prod_{e\in E(\tau)} p^{\mathrm{Lognormal}}\left(q_e\mid \mu(e,\tau), \sigma(e,\tau)\right)
\]
is our variational approximation to the posterior of branch lengths on $\tau$, where the mean and variance parameters $\mu(e,\tau), \sigma(e,\tau)$ are gathered from the structured variational parameters $\bm{\psi}$ as introduced in Section \ref{sec:variational_parameterization}. Therefore, we can estimate the marginal likelihood of $\tau$ using importance sampling with $Q_{\bm{\psi}}(\bm{q}|\tau)$ being the importance distribution as follows
\[
p(\bm{Y}|\tau) = \mathbb{E}_{Q_{\bm{\psi}}(\bm{q}|\tau)} \frac{p(\bm{Y}|\tau, \bm{q}) p(\bm{q})}{Q_{\bm{\psi}}(\bm{q}|\tau)} \simeq \frac1M \sum_{j=1}^M\frac{p(\bm{Y}|\tau, \bm{q}^j) p(\bm{q}^j)}{Q_{\bm{\psi}}(\bm{q}^j|\tau)} \; \text{ with } {\bm{q}}^j \mathrel{\overset{\makebox[0pt]{\mbox{\normalfont\tiny\sffamily iid}}}{\sim}} Q_{\bm{\psi}}(\bm{q}|\tau).
\]

\vspace{1em}
\noindent
{\bf Estimating Model Evidence}
Similarly, we can estimate the marginal likelihood of the data as follows
\[
p(\bm{Y}) = \mathbb{E}_{Q_{\bm{\phi},\bm{\psi}}}(\tau, \bm{q}) \frac{p(\bm{Y}|\tau, \bm{q})p(\tau, \bm{q})}{Q_{\bm{\phi}}(\tau)Q_{\bm{\psi}}(\bm{q}|\tau)} \simeq \frac1K\sum_{j=1}^K\frac{p(\bm{Y}|\tau^j, \bm{q}^j) p(\tau^j, \bm{q}^j)}{Q_{\bm{\phi}}(\tau^j)Q_{\bm{\psi}}(\bm{q}^j|\tau^j)}  \;\text{ with }\; \tau^j, {\bm{q}}^j \mathrel{\overset{\makebox[0pt]{\mbox{\normalfont\tiny\sffamily iid}}}{\sim}} Q_{{\bm{\phi}},{\bm{\psi}}}(\tau,\;{\bm{q}}).
\]
In our experiments, we use $K=1,000$. When taking a log transformation, the above Monte Carlo estimate is no longer unbiased (for the evidence $\log p(\bm{Y})$). Instead, it can be viewed as one sample Monte Carlo estimate of the lower bound
\begin{equation}
L^K(\bm{\phi},{\bm{\psi}}) = \mathbb{E}_{Q_{\bm{\phi},{\bm{\psi}}}(\tau^{1:K},\; \bm{q}^{1:K})}\log\left(\frac1K\sum_{i=1}^K\frac{p(\bm{Y}|\tau^i, \bm{q}^i) p(\tau^i, \bm{q}^i)}{Q_{\bm{\phi}}(\tau^i)Q_{\bm{\psi}}(\bm{q}^i|\tau^i)}\right)\leq \log p(\bm{Y})
\end{equation}
whose tightness improves as the number of samples $K$ increases. Therefore, with a sufficiently large $K$, we can use the lower bound estimate as a proxy for Bayesian model selection.

%
%

\vskip0.2in
\bibliography{deep-vbpi}

\end{document}